\documentclass{article}

\PassOptionsToPackage{numbers, compress}{natbib}

\usepackage[final]{neurips_2021}




\usepackage[utf8]{inputenc} 
\usepackage[T1]{fontenc}    
\usepackage{hyperref}       
\usepackage{url}            
\usepackage{booktabs}       
\usepackage{amsfonts}       
\usepackage{nicefrac}       
\usepackage{microtype}      
\usepackage{xcolor}         
\usepackage{graphicx}
\usepackage{amsmath}
\usepackage{multirow}
\usepackage{bbm}
\usepackage{subfigure}
\usepackage{mathtools}
\usepackage{capt-of}
\usepackage{amsmath,amsthm,amssymb}
\usepackage{bm}

\usepackage[ruled]{algorithm2e}
\newtheorem{mydef}{Definition}


\newtheorem{assumption}{Assumption}[section]
\newtheorem{lemma}{Lemma}[section]
\newtheorem{theorem}{Theorem}[section]

\newenvironment{customthm}[1]
  {\innercustomthm}
  {\endinnercustomthm}

\DeclareMathOperator*{\argmin}{argmin}

\title{Overcoming Catastrophic Forgetting in Incremental Few-Shot Learning by Finding Flat Minima}

%

\author{%
  Guangyuan Shi,$^{*}$ Jiaxin Chen\thanks{Equal contribution}, ~Wenlong Zhang, Li-Ming Zhan, Xiao-Ming Wu\thanks{Corresponding author}
  \\
  Department of Computing\\
  The Hong Kong Polytechnic University\\
  \texttt{\{guang-yuan.shi, jiax.chen, wenlong.zhang, lmzhan.zhan\}@connect.polyu.hk}\\
  \texttt{xiao-ming.wu@polyu.edu.hk}\\
}

\begin{document}

\maketitle

\begin{abstract}
 This paper considers incremental few-shot learning, which requires a model to continually recognize new categories with only a few examples provided. Our study shows that existing methods severely suffer from catastrophic forgetting, a well-known problem in incremental learning, which is aggravated due to data scarcity and imbalance in the few-shot setting. Our analysis further suggests that to prevent catastrophic forgetting, actions need to be taken in the primitive stage -- the training of base classes instead of later few-shot learning sessions. Therefore, we propose to search for flat local minima of the base training objective function and then fine-tune the model parameters within the flat region on new tasks. In this way, the model can efficiently learn new classes while preserving the old ones. Comprehensive experimental results demonstrate that our approach outperforms all prior state-of-the-art methods and is very close to the approximate upper bound. The source code is available at \url{https://github.com/moukamisama/F2M}.
\end{abstract}
\section{Introduction}\label{sec:introduction}

\textbf{Why study incremental few-shot learning?} Incremental learning enables a model to continually learn new concepts from new data without forgetting previously learned knowledge. Rooted from real-world applications, this topic has attracted a significant amount of interest in recent years~\citep{chaudhry2018riemannian,kuzborskij2013n,Lwf,icarl,kemker2017fearnet}. Incremental learning assumes sufficient training data is provided for new classes, which is impractical in many application scenarios, especially when the new classes are rare categories which are costly or difficult to collect. This motivates the study of incremental few-shot learning, a more difficult paradigm that aims to continually learn new tasks with only a few examples. 

\textbf{Challenges.} The major challenge for incremental learning is \emph{catastrophic} \emph{forgetting}~\citep{goodfellow2013empirical,kirkpatrick2017overcoming,mccloskey1989catastrophic}, which refers to the drastic performance drop on previous tasks after learning new tasks. This phenomenon is caused by the inaccessibility to previous data while learning on new data. Catastrophic forgetting presents a bigger challenge for incremental few-shot learning. Due to the small amount of training data in new tasks, the model tends to severely overfit on new classes while quickly forgetting old classes, resulting in catastrophic performance.
 
\textbf{Current research.} The study of incremental few-shot learning has just started
 ~\citep{TOPIC, ren2019incremental, zhao2006mgsvf,cheraghian2021semantic,IDLVQC, FSLL,zhang2021few}. Current works mainly borrow ideas from research in incremental learning to overcome the  forgetting problem, by enforcing strong constraints on model parameters to penalize the changes of parameters~\citep{FSLL,kirkpatrick2017overcoming,zenke2017continual}, or by saving a small amount of exemplars from old classes and adding constraints on the exemplars to avoid forgetting~\citep{icarl,NCM,castro2018end}. However, in our empirical study, we find that an intransigent model that only trains on base classes and does not tune on new tasks consistently outperforms state-of-the-art methods, including a joint-training method~\citep{TOPIC} that uses all encountered data for training and hence suffers from severe data imbalance. This observation motivates us to address this harsh problem from a different angle.

\textbf{Our solution.} Unlike existing solutions that try to overcome the catastrophic forgetting problem during the process of learning new tasks, we adopt a different approach by considering this issue during the training of base classes. Specifically, we propose to search for flat local minima of the base training objective function. For any parameter vector in the flat region around the minima, the loss is small, and the base classes are supposed to be well separated. The flat local minima can be found by adding random noise to the model parameters for multiple times and jointly optimizing multiple loss functions. During the following incremental few-shot learning stage, we fine-tune the model parameters within the flat region, which can be achieved by clamping the parameters after updating them on few-shot tasks. In this way, the model can efficiently learn new classes while preserving the old ones. Our key contributions are summarized as follows:

\begin{itemize}

\item We conduct a comprehensive empirical study on existing incremental few-shot learning methods and discover that a simple baseline model that only trains on base classes outperforms state-of-the-art methods, which demonstrates the severity of catastrophic forgetting.

\item We propose a novel approach for incremental few-shot learning by addressing the catastrophic forgetting problem in the primitive stage. Through finding the flat minima region during training on base classes and fine-tuning within the region while learning on new tasks, our model can overcome catastrophic forgetting and avoid overfitting.

\item Comprehensive experimental results on CIFAR-100, \emph{mini}ImageNet, and CUB-200-2011 show that our approach outperforms all state-of-the-art methods and achieves performance that is very close to the approximate upper bound.
\end{itemize}

\section{Related Work}

\textbf{Few-shot learning} aims to learn to generalize to new categories with a few labeled samples in each class. Current few-shot methods mainly include optimization-based methods~\cite{finn2017model,jamal2019task,liu2020ensemble,ravi2016optimization,sun2020meta,sun2019meta,yue2020interventional} and metric-based methods~\cite{gidaris2018dynamic,hou2019cross,PN,vinyals2016matching,ye2021learning,zhang2020deepemd,zhang2020deepemdd,ye2020few}. Optimization-based methods can achieve fast adaptation to new tasks with limited samples by learning a specific optimization algorithm.
Metric-based approaches exploit different distance metrics such as L2 distance~\cite{PN}, cosine similarity~\cite{vinyals2016matching}, and DeepEMD~\cite{zhang2020deepemd} in the learned metric/embedding space to measure the similarity between samples. 
Recently, Tian~\emph{et al.}~\cite{Tian2020RethinkingFI} find that standard supervised training 
can learn a good metric space 
for unseen classes,
which echoes with our observation on the proposed baseline model in Sec.~\ref{sec:analysis}.

\textbf{Incremental learning} focuses on the challenging problem of continually learning to recognize new classes in new coming data without forgetting old classes \cite{chaudhry2018efficient,chen2020mitigating,Dhar_2019_CVPR, liu2021Adaptive}. Previous research mainly includes multi-class incremental learning \cite{castro2018end, rajasegaran2020itaml, hu2021distilling,liu2020mnemonics,Yu2020SemanticDC, liu2021Adaptive} and multi-task incremental learning \cite{hu2018overcoming,Lwf,riemer2018learning}. To overcome the catastrophic forgetting problem, some attempts propose to impose strong constraints on model parameters by penalizing the changes of parameters~\cite{kirkpatrick2017overcoming,aljundi2018memory}.
Other attempts try to enforce constraints on the exemplars of old classes by restricting the output logits~\cite{icarl} or penalizing the changes of embedding angles~\cite{NCM}. In this work, our empirical study shows that imposing strong constraints on the arriving new classes may not be a promising way to tackle incremental few-shot learning, due to the scarcity of training data for new classes.

\textbf{Incremental few-shot learning}~\cite{TOPIC, ren2019incremental, zhao2006mgsvf,cheraghian2021semantic,IDLVQC} aims to incrementally learn from very few samples. 
TOPCI~\cite{TOPIC} proposes a neural gas network to learn and preserve the topology of the feature manifold formed by different classes. 
FSLL~\cite{FSLL} only selects few model parameters for incremental learning and ensures the parameters are close to the optimal ones.
To overcome catastrophic forgetting, IDLVQC~\cite{IDLVQC} imposes constraints on the saved exemplars of each class by restricting the embedding drift, and
Zhang~\emph{et al.}~\cite{zhang2021few} propose to fix the embedding network for incremental learning. Similar to the finding of Zhang~\emph{et al.}, we also discover that an intransigent model 
that simply does not adapt to new tasks can outperform prior state-of-the-art methods.

\textbf{Robust optimization.}
It has been found that flat local minima leads to better generalization capabilities than sharp minima in the sense that a flat minimizer is more robust when the test loss is shifted due to random perturbations~\cite{DBLP:conf/nips/HochreiterS94,DBLP:conf/colt/HintonC93,Jiang*2020Fantastic}.
A substantial body of methods~\cite{DBLP:journals/pnas/BaldassiPZ20,pittorino2021entropic,DBLP:conf/uai/DziugaiteR17,DBLP:conf/nips/HeHY19} have been proposed to optimize neural networks towards flat local minima.
In this paper, we show that for incremental few-shot learning, finding flat minima in the base session and tuning the model within the flat region on new tasks can significantly mitigate catastrophic forgetting.

\section{Severity of Catastrophic Forgetting in Incremental Few-Shot Learning}\label{sec:analysis}

\subsection{Problem Statement}
Incremental few-shot learning (IFL) aims to continually learn to recognize new classes with only few examples. Similar to incremental learning (IL), an IFL model is trained by a sequence of training sessions $\{\mathcal{D}^1, \cdots, \mathcal{D}^t\}$, where  $\mathcal{D}^t=\{z_i=(x_i^t, y_i^t)\}_i$ is the training data of session $t$ and  $x_i^t$ is an example of class $y_i^t\in\mathcal{C}^{t}$ (the class set of session $t$). In IFL, the base session $\mathcal{D}^1$ usually contains a large number of classes with sufficient training data for each class, while the following sessions ($t\geq 2$) only have a small number of classes with few training samples per class, e.g., $\mathcal{D}^t$ is often presented as an $N$-way $K$-shot task with small $N$ and $K$. The key difference between IL and IFL is, for IL, sufficient training data is provided in each session. Similar to IL, in each training session $t$ of IFL, the model has only access to the training data $\mathcal{D}^t$ and possibly a small amount of saved exemplars from previous sessions. When the training of session $t$ is completed, the model is evaluated on test samples from all encountered classes $\mathcal{C}=\bigcup_{i=1}^t \mathcal{C}^{i}$, where it is assumed that there is no overlap between the classes of different sessions, i.e., $\forall i,j$ and $i\neq j$, $\mathcal{C}^i \bigcap \mathcal{C}^j = \emptyset$. 

\textbf{Catastrophic forgetting.} IFL is undoubtedly a more challenging problem than IL due to the data scarcity setting. IL suffers from catastrophic forgetting, a well-known phenomenon and long-standing issue, which refers to the drastic drop in test performance on previous (old) classes, caused by the inaccessibility of old data in the current training session. Unfortunately, catastrophic forgetting is an even bigger issue for IFL, because data scarcity makes it difficult to adapt well to new tasks and learn new concepts, while the adaptation process could easily lead to the forgetting of base classes. In the following, we illustrate this point by evaluating a simple baseline model for IFL.

\subsection{A Simple Baseline Model for IFL}

We consider an intransigent model that simply does not adapt to new tasks. Particularly, the model only needs to be trained in the base session $\mathcal{D}^1$ and is directly used for inference in all sessions. 

\textbf{Training ($t=1$).} We train a feature extractor $f$ parameterized by $\phi$ with a fully-connected layer as classifier by minimizing the standard cross-entropy loss using the training examples of $\mathcal{D}^1$. The feature extractor $f$ is \textit{fixed} for the following sessions ($t\geq 2$) without any fine-tuning on new classes. 

\textbf{Inference (test).}
In each session, the inference is conducted by a simple nearest class mean (NCM) classification algorithm~\citep{mensink2013distance}. Specifically, all the training and test samples are mapped to the embedding space of the feature extractor $f$, and Euclidean distance $d(\cdot,\cdot)$ is used to measure the similarity between them. The classifier is given by
\begin{align}
    c_k^\star = \argmin_{c\in\mathcal{C}} d(f(x;\phi), p_c),\ \text{where} \
    p_c = \frac{1}{N_c} \sum_{i}\mathbbm{1}{(y_i=c)}f(x_i;\phi)\label{eq:prototypes},
\end{align}
where $\mathcal{C}$ denotes all the encountered classes, $p_c$ refers to the prototype of class $c$ (the mean vector of all the training samples of class $c$ in the embedding space), and $N_c$ denotes the number of the training images of class $c$. Note that we \textit{save} the prototypes of all classes in $\mathcal{C}^{t}$ for later evaluation. 

\textbf{The baseline model outperforms state-of-the-art IFL and IL methods.} We compare the above baseline model against state-of-the-art IFL methods including FSLL~\citep{FSLL}, IDLVQC~\citep{IDLVQC} and TOPIC~\citep{TOPIC}, IL methods including Rebalance~\citep{NCM} and iCarl~\citep{icarl}, and a joint-training method that uses all previously seen data including the base and the following few-shot tasks for training, for IFL. The performance is evaluated on miniImageNet, CIFAR-100, and CUB-200. We tune the methods re-implemented by us to the best performance. For the other methods, we use the results reported in the original papers. The experimental details are provided in Sec.~\ref{sec:exp}.
As shown in Fig.~\ref{fig_table1}, the baseline model consistently outperforms all the compared methods including the joint-training method (which suffers from severe data imbalance) on every dataset\footnote{We notice that a similar observation is made in a newly released paper~\citep{zhang2021few}.}. The fact that an intransigent model performs best suggests that 
\begin{itemize}
    \item For IFL, preserving the old (base classes) may be more critical than adapting to the new. Due to data scarcity, the performance gain on new classes is limited and cannot make up for the significant performance drop on base classes. 
    \item Prior works~\citep{TOPIC,IDLVQC,FSLL,NCM,icarl} that enforce strong constraints on model parameters or exemplars during fine-tuning on new classes cannot effectively prevent catastrophic forgetting in IFL, indicating that actions may need to be taken in the base training stage.
\end{itemize}

\begin{figure}
    \centering
  \includegraphics[width=1.0\linewidth]{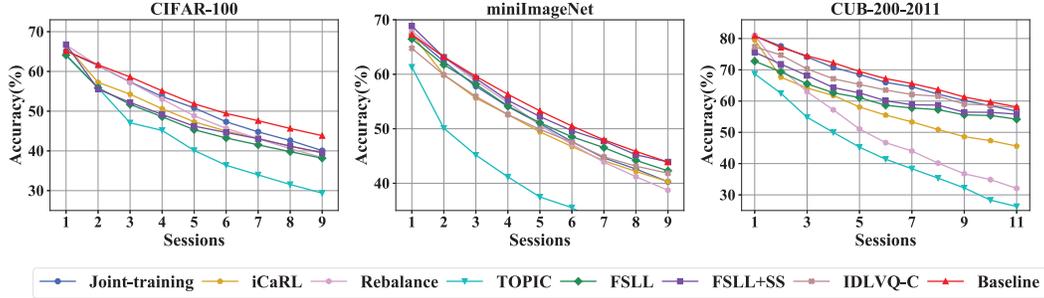}
  \caption{Comparison of the proposed baseline model with state-of-the-art IFL and IL methods and the joint-training method.The baseline model outperforms all the other methods.}
  \label{fig_table1}
\end{figure}

\section{Overcoming Catastrophic Forgetting in IFL by Finding Flat Minima}
The goal of IFL is to preserve the old while adapting to the new efficiently. The results and analysis in Sec.~\ref{sec:analysis} suggest that it might be ``a bit late'' to try to prevent catastrophic forgetting in the few-shot learning sessions ($t\geq 2$), which motivates us to consider this problem in the base training session. 

\textbf{Overview of our approach.} 
To overcome catastrophic forgetting in IFL, we propose to find a $b$-flat ($b>0$) local minima $\theta^\star$ of the base training objective function and then fine-tune the model within the flat region in later few-shot learning sessions.
Specifically, for any parameter vector $\theta$ in the flat region, i.e., $\theta^\star-b  \preceq\theta\preceq\ \theta^\star+b$, the risk (loss) of the base classes is minimized such that the classes are well separated in the embedding space of $f_\theta$. In the later incremental few-shot learning sessions ($t\geq 2$), we fine-tune the model parameters within this region to learn new classes, i.e., to find 
\begin{align*}
    \theta' = \arg\min_\theta \sum_{z\in \mathcal{D}^t}\mathcal{L}(z;\theta),~~ \text{s.t.}~~\ \theta^\star-b  \preceq\theta\preceq\ \theta^\star+b.
\end{align*}
As such, the fine-tuned model $\theta'$ can adapt to new classes while preserving the old ones. Also, due to the nature of few-shot learning, to avoid excessive training and overfitting, it suffices to tune the model in a relatively small region. A graphical illustration of our approach and prior arts, as well as the notions of sharp minima and flat minima, are presented in Fig.~\ref{fig_illustration}. 

\subsection{Searching for Flat Local Minima in the Base Training Stage}

A formal definition of $b$-flat local minima is given as follows.

\begin{mydef}[$b$-Flat Local Minima]\label{def:minima}
Given a real-valued objective function $\mathcal{L}(z;\theta)$, for any $b>0$, $\theta^\star$ is a $b$-flat local minima of $\mathcal{L}(z;\theta)$, if the following conditions are satisfied.
\begin{itemize}
    \item Condition 1: 
    $\mathcal{L}(z;\theta^\star)=\mathcal{L}(z;\theta^\star+\epsilon)$, where $-\mathbf{b} \preceq\epsilon\preceq \mathbf{b}$ and $\mathbf{b}_i=b$. 
    \item Condition 2: 
    there exist $\mathbf{c}_1 \prec \theta^\star-{b}$ and $\mathbf{c}_2 \succ \theta^\star+{b}$, s.t. $\mathcal{L}(z;\theta)>\mathcal{L}(z;\theta^\star)$, where $\mathbf{c}_1 \prec \theta \prec \theta^\star-{b}$ and  $\mathcal{L}(z;\theta^\star)<\mathcal{L}(z;\theta)$, where $\theta^\star+{b} \prec\theta \prec \mathbf{c}_2$.
\end{itemize}

\end{mydef}

\begin{figure}[t]
  \centering
    \includegraphics[width = 0.98\linewidth]{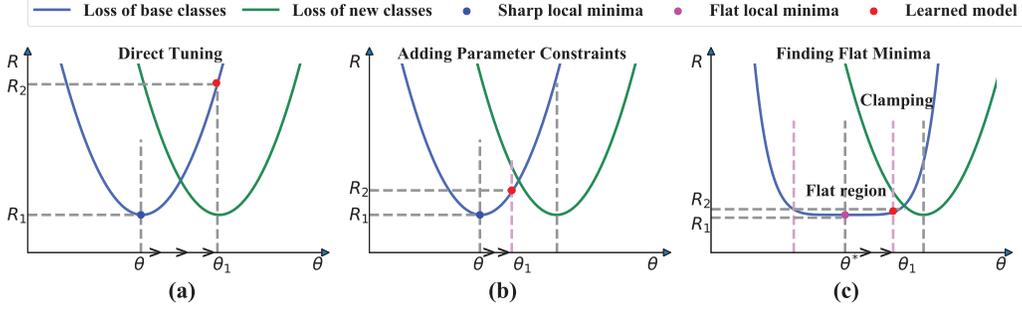} \label{fig_emb_tf}
  \caption{Illustration of our approach and existing solutions. $\to$ indicates the incremental learning steps on new classes. $R_1$ and $R_2$ respectively denote the loss of base classes before and after minimizing the loss of new classes. (a) SGD finds sharp minima in the base training. Directly tuning the model on new classes will result in a severe performance drop on base classes. (b) Enforcing strong constraints on parameters by penalizing parameter changes~\citep{aljundi2018memory,kirkpatrick2017overcoming,FSLL} may still lead to a significant performance drop on base classes. (c) Finding flat local minima of base classes and clamping the parameters after trained on new classes to make them fall within the flat region can effectively mitigate catastrophic forgetting.
  }
  \label{fig_illustration}
\end{figure}

In practice, it is hard to find the flat local minima that strictly satisfies the above definition, which may not even exist. Hence, our goal is to find an approximately flat local minima of the base training objective function. To this end, we propose to add some small random noise to the model parameters. The noise can be added for multiple times to obtain similar but different loss functions, which will be optimized together to locate the flat minima region. The intuition is clear -- the parameter vectors around the flat local minima also have small function values. 

To formally state the idea, we assume that the model is parameterized by $\theta =\{\phi,\psi\}$, where $\phi$ denotes the parameters of the embedding network and $\psi$ denotes the parameters of the classifier. $z$ denotes a labelled training sample. 
Denote the loss function by $\mathcal{L}$: $\mathbb{R}^{d_z}\to\mathbb{R}$. 
Our target is to minimize the expected loss function $R$: $\mathbb{R}^{d}\to\mathbb{R}$ w.r.t. the joint distribution of data $z$ and noise $\epsilon$, i.e.,
\begin{equation}\label{eq:Risk}
    R(\theta) = \int_{\mathbb{R}^{d_\epsilon}}\int_{\mathbb{R}^{d_z}}\mathcal{L}(z; \phi+\epsilon, \psi)\:d P(z)d P(\epsilon) = \mathbb{E}[\mathcal{L}(z; \phi+\epsilon, \psi)],
\end{equation}
where $P(z)$ is the data distribution and $P(\epsilon)$ is the noise distribution, and $z$ and $\epsilon$ are independent.
Since it is impossible to minimize the expected loss, we minimize its estimation, the empirical loss, which is given by
\begin{equation}\label{eq:emprical_risk}
    \mathcal{L}(\theta) = \frac{1}{M}\sum_{j=1}^{M}\mathcal{L}_{\text{base}}(z;\phi +\epsilon_j, \psi), ~\text{where}
\end{equation}

\begin{equation}\label{eq:base_loss}
    \mathcal{L}_{\text{base}}(z;\phi +\epsilon_j, \psi) =  \frac{1}{|\mathcal{D}^1|}\sum_{z\in\mathcal{D}^1}\mathcal{L}_{ce}(z;\phi +\epsilon_j, \psi) + \lambda\frac{1}{|\mathcal{C}^1|}\sum_{c\in\mathcal{C}^1}\|p_c - p_c^*\|_2^2,
\end{equation}
where $\epsilon_j$ is a noise vector sampled from $P(\epsilon)$, $M$ is the sampling times, $\mathcal{L}_{ce}(z;\phi +\epsilon_j, \psi)$ refers to the cross-entropy loss of a training sample $z$, and $p_c$ and $p_c^*$ are the class prototypes before and after injecting noise respectively. The first term of $\mathcal{L}_{base}$ is designed to find the flat region where the parameters $\phi$ of the embedding network can well separate the base classes. The second term enforces the class prototypes fixed within such region, which is designed to solve the prototype drift problem~\citep{Yu2020SemanticDC, IDLVQC} (the class prototypes change after updating the network) in later incremental learning sessions such that the saved base class prototypes can be directly used for evaluation in later sessions.

\subsection{Incremental Few-shot Learning within the Flat Region} \label{sec:incremental}
In the incremental few-shot learning sessions ($t\geq 2$), we fine-tune the parameters $\phi$ of the embedding network \textit{within the flat region} to learn new classes. It is worth noting that while the flat region might be relatively small, it is enough for incremental few-shot learning. Because  only few training samples are provided for each new class, to prevent overfitting in few-shot learning, excessive training should be avoided and only a small number of update iterations should be applied. 

We employ a metric-based classification algorithm with Euclidean distance to fine-tune the parameters. The loss function is defined as
\begin{equation}\label{eq:euclidean}
    \mathcal{L}_{m}(z;\phi) = - \sum_{z\in\mathcal{D}}\sum_{c\in\mathcal{C}}\mathbbm{1}(y=c)\log(\frac{e^{-d(p_{c},f(x;\phi))}}{\sum_{c_k\in \mathcal{C}}e^{-d(p_{c_k},f(x;\phi))}}),
z\end{equation}
where $d(\cdot,\cdot)$ denotes Euclidean distance, $p_c$ is the prototype of class $c$, $\mathcal{C}=\bigcup_{i=1}^t \mathcal{C}^{i}$ refers to all encountered classes, and $\mathcal{D}=\mathcal{D}^t\bigcup\mathcal{P}$ denotes the union of the current training data $\mathcal{D}^t$ and the exemplar set $\mathcal{P}=\{P_2,...,P_{t-1}\}$, where $P_{t_e} (2 \leq t_e<t)$ is the set of saved exemplars in session $t_e$. 
Note that the prototypes of new classes are computed by Eq.~\ref{eq:prototypes}, and those of base classes are saved in the base session. After updating the embedding network parameters, we clamp them to ensure that they fall within the flat region, i.e. $\phi^\star-b \preceq \phi \preceq \phi^\star +b$, where $\phi^\star$ denotes the optimal parameter vector learned in the base session. After fine-tuning, we evaluate the model using the nearest class mean classifier as in Eq.~\ref{eq:prototypes}, with previously saved prototypes and newly computed ones. The whole training process is described in Algorithm~\ref{alg:1}. Note that to calibrate the estimates of the classifier, we normalize all prototypes to make those of base classes and those of new classes have the same norm.

\SetKwInput{KwInput}{Input} 
\SetKwInput{KwOutput}{Output}
\SetKwInput{KwInitialize}{Initialize}    
\begin{algorithm}[t]
\SetAlgoLined
\KwInput{the flat region bound $b$, randomly initialized $\theta=\{\phi,\psi\}$, the step sizes $\alpha$ and $\beta$.}
\tcp{\textcolor{blue}{Training over base classes $t=1$}}
\For{{\upshape epoch} k = {\upshape 1,2,...}}{
\For{j = {\upshape 1,2,..., $M$}}{
    Sample a noise vector $\epsilon_j\sim P(\epsilon)$, s.t. $-\mathbf{b} \preceq\epsilon_j\preceq \mathbf{b}$\;
    Add the noise to the parameters of the embedding network, i.e., $\theta = \{\phi + \epsilon_j, \psi\}$\;
    Compute the base loss $\mathcal{L}_{base}$ with Eq.~\ref{eq:base_loss}\;
    Reset the parameters, i.e., $\theta = \{\phi, \psi\}$\;
}
Update $\theta = \theta - \alpha\nabla\mathcal{L}(\theta)$ with the loss $\mathcal{L}$ defined in Eq.~\ref{eq:emprical_risk}.
}
Normalize and save the prototype of each base class\;
\BlankLine
\tcp{\textcolor{blue}{Incremental learning $t\geq 2$}}
Combine the training data $\mathcal{D}^t$ and the exemplars saved in previous few-shot sessions $2 \leq t_e<t$\;
\For{{\upshape epoch} k = {\upshape 1,2,...}}{
Compute the metric-based classification loss $\mathcal{L}_{m}$ by Eq.~\ref{eq:euclidean}\;
Update $\phi = \phi - \beta\nabla\mathcal{L}_{m}(z;\phi)$\;
Clamp the parameters $\phi$ to ensure they fall in the flat minima region\;
}
Randomly select and save a few exemplars from the training data $\mathcal{D}^t$\;
Normalize and save the prototype of each new class\;
\KwOutput{Model parameters $\theta=\{\phi,\psi\}$.}
\caption{F2M} \label{alg:1}
\end{algorithm}

\subsection{Convergence Analysis}\label{sec:theory}
Our aim is to find a flat region within which all parameter vectors work well. We then minimize the expected loss w.r.t. the joint distribution of noise $\epsilon$ and data $z$. To approximate this expected loss, we sample from $P(\epsilon)$ for multiple times in each iteration and optimize the objective function using stochastic gradient descent (SGD). Here, we provide theoretical guarantees for our method. Given the non-convex loss function in Eq.~\ref{eq:emprical_risk}, we prove the convergence of our proposed method. The proof idea is inspired by the convergence analysis of SGD~\citep{bottou2018optimization,Kiefer1952StochasticEO}. 

Formally, in each batch $k$, let $z_k$ denote the batch data, $\{\epsilon_j\}_{j=1}^{M}$ be the sampled noises, and $\alpha_k$ be the step size. In the base training session, we update the model parameters as follows:
\begin{equation}\label{eq:base_update}
    \theta_{k+1} = \theta_{k} - \frac{\alpha_k}{M}\sum_{j=1}^{M}\nabla\mathcal{L}_{\text{base}}(z_k;\phi_k +\epsilon_j, \psi_k) = \theta_{k} - \frac{\alpha_k}{M}\sum_{j=1}^{M}g(z_k;\phi_k +\epsilon_j, \psi_k),
\end{equation}

where $g(z_k;\phi_k +\epsilon_j, \psi_k)=\nabla\mathcal{L}_{\text{base}}(z_k;\phi_k +\epsilon_j, \psi_k)$ is the gradient. To formally analyze the convergence of our algorithm, we define the following assumptions.

\begin{assumption}[L-smooth risk function] \label{assump:risk function}
The expected loss function $R: \mathbb{R}^d\to\mathbb{R}$ {\rm(Eq.~\ref{eq:Risk})} is continuously differentiable and L-smooth with constant $L>0$ such that \label{assump:L-smooth}
\begin{equation}
    \|\nabla R(\theta) - \nabla R(\theta')\|_2 \leq L\|\theta-\theta'|.
\end{equation}
\end{assumption}

This assumption is significant for the convergence analysis of gradient-based optimization algorithms, since it limits how fast the gradient of the loss function can change w.r.t. the parameter vector.

\begin{assumption}\label{assump:gradient}
The expected loss function satisfies the following conditions:

\begin{itemize}
    \item Condition 1: 
    $R$ is bounded below by a scalar $R^\star$, given the sequence of parameters $\{\theta_k\}$.
    \item Condition 2:
    For all $k\in \mathbb{N}$ and $j\in[1, M]$, 
    \begin{equation}
        \mathbb{E}_{z_{k},\epsilon_j}[g(z_k;\phi_k +\epsilon_j, \psi_k)] = \nabla R(\theta_k).
    \end{equation}
    \item Condition 3:
    There exist scalars $m_1\geq0$ and $m_2\geq0$, for all $k\in \mathbb{N}$ and $j\in[1, M]$,
    \begin{equation} \label{eq:var}
        \mathbb{V}_{z_{k},\epsilon_j}[g(z_k;\phi_k +\epsilon_j, \psi_k)] \leq m_1 + m_2\|\nabla R(\theta_k)\|_{2}^{2}.
    \end{equation}
\end{itemize}
\end{assumption}

$\mathbb{E}_{z_{k},\epsilon_j}[\cdot]$ denotes the expectation w.r.t. the joint distribution of random variables $z_k$ and $\epsilon_j$, and $\mathbb{V}_{z_{k},\epsilon_j}[\cdot]$ denotes the variance.
Condition 1 ensures that the expected loss $R$ is bounded by a minimum value $R^\star$ during the updates, which is a natural and practical assumption. Condition 2 assumes that the gradient $g(z_k;\phi_k +\epsilon_j, \psi_k)$ is an unbiased estimate of $\nabla R(\theta_k)$. This is a strict assumption made to simplify the proof, but it can be easily relaxed to a general and easily-met condition that there exist $\mu_1 \geq \mu_2>0$ satisfying $\|\mathbb{E}_{z_{k},\epsilon_j}[g(z_k;\phi_k +\epsilon_j, \psi_k)]\|_2 \leq \mu_1\|\nabla R(\theta_k)\|_2$ and $\nabla R(\theta_k)^{T}\mathbb{E}_{z_{k},\epsilon_j}[g(z_k;\phi_k +\epsilon_j, \psi_k)] \geq \mu_2\|\nabla R(\theta_k)\|_2^2$.
Therefore, the convergence can be proved in a similar way using the techniques presented in the Appendix. Condition 3 assumes the variance of the gradient $g(z_k;\phi_k +\epsilon_j, \psi_k)$ cannot be arbitrarily large, which is also reasonable in practice.
To facilitate later analysis, similar to~\citep{Robbins2007ASA}, we restrict the step sizes  as follows.

\begin{assumption}\label{assump:lr}
The learning rates satisfy:
\begin{equation}\label{eq:lr}
    \sum_{k=1}^{\infty}\alpha_k=\infty, \:\:\sum_{k=1}^{\infty}\alpha_k^2<\infty.
\end{equation}
\end{assumption}

This assumption can be easily met, since in practice the learning rate $\alpha_k$ is usually far less than $1$ and decreases w.r.t. $k$. Based on the above assumptions, we can derive the following theorem. 
\begin{theorem}\label{eq:theorem}
Under assumptions~\ref{assump:L-smooth}, \ref{assump:gradient} and \ref{assump:lr}, we further assume that the risk function $R$ is twice differentiable, and that $\|\nabla R(\theta)\|_2^2$ is $L_2$-smooth with constant $L_2>0$, then we have
\begin{equation}
    \lim_{k\to\infty} \mathbb{E}[\|\nabla R(\theta_k)\|_2^2] = 0.
\end{equation}
\end{theorem}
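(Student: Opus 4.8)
The plan is to run the classical stochastic-gradient convergence argument (in the spirit of~\citep{bottou2018optimization}) directly on the expected loss $R$, exploiting that, by Condition~2 of Assumption~\ref{assump:gradient}, the update direction $\bar g_k := \frac{1}{M}\sum_{j=1}^{M} g(z_k;\phi_k+\epsilon_j,\psi_k)$ is a conditionally unbiased estimate of $\nabla R(\theta_k)$ given $\theta_k$. First I would invoke $L$-smoothness (Assumption~\ref{assump:L-smooth}) to write the descent inequality
\[
R(\theta_{k+1}) \le R(\theta_k) + \nabla R(\theta_k)^{\top}(\theta_{k+1}-\theta_k) + \tfrac{L}{2}\|\theta_{k+1}-\theta_k\|_2^2 ,
\]
substitute the update $\theta_{k+1}-\theta_k = -\alpha_k\bar g_k$ from Eq.~\ref{eq:base_update}, and take the expectation conditioned on $\theta_k$ (the fresh randomness $z_k,\epsilon_1,\dots,\epsilon_M$ being independent of $\theta_k$). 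Condition~2 turns the inner-product term into $-\alpha_k\|\nabla R(\theta_k)\|_2^2$, while the bias--variance decomposition together with Condition~3 gives $\mathbb{E}[\|g(z_k;\phi_k+\epsilon_j,\psi_k)\|_2^2\mid\theta_k]\le (1+m_2)\|\nabla R(\theta_k)\|_2^2+m_1$ for each $j$, hence by convexity of $\|\cdot\|_2^2$ the same bound for $\mathbb{E}[\|\bar g_k\|_2^2\mid\theta_k]$. Combining yields the one-step recursion
\[
\mathbb{E}[R(\theta_{k+1})\mid\theta_k] \le R(\theta_k) - \alpha_k\Bigl(1-\tfrac{L\alpha_k}{2}(1+m_2)\Bigr)\|\nabla R(\theta_k)\|_2^2 + \tfrac{L m_1}{2}\alpha_k^2 .
\]

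Next, since $\sum_k\alpha_k^2<\infty$ forces $\alpha_k\to0$ (Assumption~\ref{assump:lr}), for all $k$ past some $k_0$ the factor $1-\tfrac{L\alpha_k}{2}(1+m_2)$ exceeds $\tfrac12$. Taking total expectations, summing from $k_0$ to $K$, telescoping, and using that $R$ is bounded below by $R^\star$ (Condition~1 of Assumption~\ref{assump:gradient}) together with $\sum_k\alpha_k^2<\infty$, I obtain $\sum_{k}\alpha_k\,\mathbb{E}[\|\nabla R(\theta_k)\|_2^2]<\infty$. The same recursion, dropping the negative term, also gives $\mathbb{E}[R(\theta_{k+1})]\le \mathbb{E}[R(\theta_k)]+\tfrac{Lm_1}{2}\alpha_k^2$ for $k\ge k_0$, so $\sup_k\mathbb{E}[R(\theta_k)]<\infty$, and then $L$-smoothness gives $\|\nabla R(\theta)\|_2^2\le 2L\bigl(R(\theta)-R^\star\bigr)$, whence $s_k := \mathbb{E}[\|\nabla R(\theta_k)\|_2^2]$ is uniformly bounded by some $B<\infty$. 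Because $\sum_k\alpha_k=\infty$, summability of $\alpha_k s_k$ already forces $\liminf_{k\to\infty}s_k=0$.

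It remains to upgrade $\liminf$ to $\lim$, and this is where the extra hypotheses of the theorem (twice differentiability of $R$, $L_2$-smoothness of $h(\theta):=\|\nabla R(\theta)\|_2^2$) enter. Applying the standard smoothness estimate to $h$ gives $|h(\theta_{k+1})-h(\theta_k)|\le \|\nabla h(\theta_k)\|_2\|\theta_{k+1}-\theta_k\|_2+\tfrac{L_2}{2}\|\theta_{k+1}-\theta_k\|_2^2$; since $\nabla h(\theta)=2\nabla^2 R(\theta)\nabla R(\theta)$ and $\|\nabla^2 R(\theta)\|\le L$ by Assumption~\ref{assump:L-smooth}, the right-hand side is at most $2L\alpha_k\|\nabla R(\theta_k)\|_2\|\bar g_k\|_2+\tfrac{L_2}{2}\alpha_k^2\|\bar g_k\|_2^2$. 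Taking total expectations, applying Cauchy--Schwarz together with the bounds $\mathbb{E}[\|\bar g_k\|_2^2]\le(1+m_2)s_k+m_1$ and $s_k\le B$, and using $\alpha_k^2\le\alpha_k$ for large $k$, I get $|s_{k+1}-s_k|\le C\alpha_k$ for all large $k$ and a fixed constant $C$. Then a contradiction argument closes the proof: if $\limsup_k s_k=\delta>0$, then for $0<\varepsilon<\delta/2$ the sequence $s_k$ drops below $\varepsilon$ and later rises above $\delta/2$ infinitely often; on each such crossing interval $[k_i,m_i]$ the slow-variation bound forces $\sum_{k\in[k_i,m_i]}\alpha_k\ge(\delta/2-\varepsilon)/C$, hence $\sum_{k\in[k_i,m_i]}\alpha_k s_k\ge \tfrac{\varepsilon}{2}(\delta/2-\varepsilon)/C$ for $i$ large, and summing over the infinitely many disjoint intervals contradicts $\sum_k\alpha_k s_k<\infty$. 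Therefore $\limsup_k s_k=0$, i.e. $\lim_{k\to\infty}\mathbb{E}[\|\nabla R(\theta_k)\|_2^2]=0$.

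I expect the main obstacle to be precisely this last step: converting the classical $\liminf$-type guarantee into a genuine limit. The per-step inequalities are mechanical consequences of the assumptions, but making the crossing-interval argument rigorous — choosing the $k_i,m_i$, handling that the slow-variation constant only applies eventually, and controlling the interval endpoints — requires care, and it rests on first establishing the uniform bound on $s_k$ (equivalently $\sup_k\mathbb{E}[R(\theta_k)]<\infty$) from the Lyapunov recursion, on which the $O(\alpha_k)$ slow-variation estimate depends. A secondary, mostly bookkeeping, point is being precise about the filtration and the independence of $(z_k,\epsilon_1,\dots,\epsilon_M)$ from $\theta_k$, so that Conditions~2 and~3 may legitimately be applied conditionally at each step.
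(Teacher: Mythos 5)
Your proposal is correct, and it follows the paper's own proof for the first two stages: the $L$-smoothness descent inequality, conditioning on $\theta_k$ so that Condition~2 of Assumption~\ref{assump:gradient} kills the cross term, bounding $\mathbb{E}[\|\overline{g}\|_2^2]$ via the bias--variance decomposition, telescoping against the lower bound $R^\star$ to get $\sum_k \alpha_k\,\mathbb{E}[\|\nabla R(\theta_k)\|_2^2]<\infty$, and hence $\liminf_k \mathbb{E}[\|\nabla R(\theta_k)\|_2^2]=0$ (this is exactly Lemmas~\ref{lemma:pre} and~\ref{lemma:liminf}; the only cosmetic difference is that the paper exploits sampling without replacement to get the sharper variance bound $\tfrac{m_1}{M}+\tfrac{m_2+M}{M}\|\nabla R(\theta_k)\|_2^2$, whereas your Jensen bound without the $1/M$ gain is equally sufficient). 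Where you genuinely diverge is in upgrading $\liminf$ to $\lim$. Both arguments start from the $L_2$-smoothness of $F(\theta)=\|\nabla R(\theta)\|_2^2$ and the bound $\|\nabla F\|_2\le 2\|\nabla^2 R\|_2\|\nabla R\|_2\le 2L\|\nabla R\|_2$, but the paper uses only a \emph{one-sided} estimate, $\mathbb{E}[F(\theta_{k+1})]-\mathbb{E}[F(\theta_k)]\le 2\alpha_k L\,s_k + O(\alpha_k^2)$, which is the term of a convergent series; it then decomposes the increments into positive and negative variations $A_K^{+},A_K^{-}$, shows both converge, concludes that $\mathbb{E}[F(\theta_k)]$ converges, and combines this with Lemma~\ref{lemma:liminf}. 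You instead derive a \emph{two-sided} slow-variation bound $|s_{k+1}-s_k|\le C\alpha_k$ (which forces you to first establish $\sup_k s_k<\infty$ via $\|\nabla R(\theta)\|_2^2\le 2L(R(\theta)-R^\star)$ and the Lyapunov recursion) and then run a crossing-interval contradiction against $\sum_k\alpha_k s_k<\infty$. Both closings are classical and valid; the paper's positive-variation argument is somewhat more economical since it avoids the uniform bound on $s_k$ (noting $\alpha_k^2 s_k\le \alpha_k s_k$ for small $\alpha_k$) and the bookkeeping of crossing times, while yours makes the mechanism --- that $s_k$ cannot repeatedly climb back up without spending non-summable weighted mass above any positive level --- more explicit.
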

This theorem establishes the convergence of our algorithm. The proof is provided in Appendix~A.1.

\section{Experiments}\label{sec:exp}
In this section, we empirically evaluate our proposed method for incremental few-shot learning and demonstrate its effectiveness by comparison with state-of-the-art methods.

\subsection{Experimental Setup}\label{sec:exp_setup}

\begin{table*}[t]
\renewcommand\arraystretch{1.0}
\caption{Classification accuracy on CIFAR-100 for 5-way 5-shot incremental learning. \textsuperscript{$\ast$} indicates our re-implementation.} 
\label{tab:cifar}
\centering
\resizebox{0.9\textwidth}{!}{
\begin{tabular}{lcccccccccc}
  \toprule
  \multirow[m]{2}{*}{\textbf{Method}} & \multicolumn{9}{c}{\textbf{sessions}} & \multirow[m]{2}{*}{\shortstack{\textbf{The gap}\\ \textbf{with cRT}}}\\
  \cmidrule[0.05em]{2-10}
  & 1 & 2 & 3 & 4 & 5 & 6 & 7 & 8 & 9 &\\
  \midrule
  cRT~\cite{cRT}\textsuperscript{$\ast$}            & 65.18 & 63.89 & 60.20 & 57.23 & 53.71 & 50.39 & 48.77 & 47.29 & 45.28 & -\\
  Joint-training\textsuperscript{$\ast$}               & 65.18 & 61.45 & 57.36 & 53.68 & 50.84 & 47.33 & 44.79 & 42.62 & 40.08 & -5.20\\
  Baseline                                          & 65.18 & 61.67 & 58.61 & 55.11 & 51.86 & 49.43 & 47.60 & 45.64 & 43.83 & -1.45\\
  \midrule
  iCaRL~\cite{icarl}\textsuperscript{$\ast$}        & 66.52 & 57.26 & 54.27 & 50.62 & 47.33 & 44.99 & 43.14 & 41.16 & 39.49 & -5.79\\
  Rebalance~\cite{NCM}\textsuperscript{$\ast$}      & \textbf{66.66} & 61.42 & 57.29 & 53.02 & 48.85 & 45.68 & 43.06 & 40.56 & 38.35 & -6.93 \\
  FSLL~\cite{FSLL}\textsuperscript{$\ast$}          & 65.18 & 56.24 & 54.55 & 51.61 & 49.11 & 47.27 & 45.35 & 43.95 & 42.22 & -3.08\\
  iCaRL~\cite{icarl}     & 64.10 & 53.28 & 41.69 & 34.13 & 27.93 & 25.06 & 20.41 & 15.48 & 13.73 & -31.55\\
  Rebalance~\cite{NCM}   & 64.10 & 53.05 & 43.96 & 36.97 & 31.61 & 26.73 & 21.23 & 16.78 & 13.54 & -31.74\\
  TOPIC~\cite{TOPIC}     & 64.10 & 55.88 & 47.07 & 45.16 & 40.11 & 36.38 & 33.96 & 31.55 & 29.37 & -15.91 \\
  FSLL~\cite{FSLL}       & 64.10 & 55.85 & 51.71 & 48.59 & 45.34 & 43.25 & 41.52 & 39.81 & 38.16 & -7.12\\
  FSLL+SS~\cite{FSLL}    & 66.76 & 55.52 & 52.20 & 49.17 & 46.23 & 44.64 & 43.07 & 41.20 & 39.57 & -5.71\\
  \midrule
  \textbf{F2M}                   & 64.71 & \textbf{62.05} & \textbf{59.01} & \textbf{55.58} & \textbf{52.55} & \textbf{49.96} & \textbf{48.08} & \textbf{46.28} & \textbf{44.67} & \textbf{-0.61}\\
  \bottomrule
\end{tabular}
}
\end{table*}

\begin{table*}[t]
\renewcommand\arraystretch{1.0}
\caption{Classification accuracy on \emph{mini}ImageNet for 5-way 5-shot incremental learning. \textsuperscript{$\ast$} indicates our re-implementation.} 
\label{tab_miniImageNet}
\centering
\resizebox{0.9\textwidth}{!}{
\begin{tabular}{lcccccccccc}
  \toprule
  \multirow[m]{2}{*}{\textbf{Method}} & \multicolumn{9}{c}{\textbf{sessions}} & \multirow[m]{2}{*}{\shortstack{\textbf{The gap}\\ \textbf{with cRT}}}\\
  \cmidrule[0.05em]{2-10}
  & 1 & 2 & 3 & 4 & 5 & 6 & 7 & 8 & 9 &\\
  \midrule
  cRT~\cite{cRT}\textsuperscript{$\ast$}            & 67.30 & 64.15 & 60.59 & 57.32 & 54.22 & 51.43 & 48.92 & 46.78 & 44.85 & -\\
  Joint-training\textsuperscript{$\ast$}            & 67.30 & 62.34 & 57.79 & 54.08 & 50.93 & 47.65 & 44.64 & 42.61 & 40.29 & -4.56\\
  Baseline                                          & 67.30 & 63.18 & 59.62 & 56.33 & 53.28 & 50.50 & 47.96 & 45.85 & 43.88 & -0.97\\
  \midrule
  iCaRL~\cite{icarl}\textsuperscript{$\ast$}        & 67.35 & 59.91 & 55.64 & 52.60 & 49.43 & 46.73 & 44.13 & 42.17 & 40.29 & -4.56\\
  Rebalance~\cite{NCM}\textsuperscript{$\ast$}      & 67.91 & 63.11 & 58.75 & 54.83 & 50.68 & 47.11 & 43.88 & 41.19 & 38.72 & -6.13 \\
  FSLL~\cite{FSLL}\textsuperscript{$\ast$}          & 67.30 & 59.81 & 57.26 & 54.57 & 52.05 & 49.42 & 46.95 & 44.94 & 42.87 & -1.11\\
  iCaRL~\cite{icarl}     & 61.31 & 46.32 & 42.94 & 37.63 & 30.49 & 24.00 & 20.89 & 18.80 & 17.21 & -27.64\\
  Rebalance~\cite{NCM}   & 61.31 & 47.80 & 39.31 & 31.91 & 25.68 & 21.35 & 18.67 & 17.24 & 14.17 & -30.68\\
  TOPIC~\cite{TOPIC}     & 61.31 & 50.09 & 45.17 & 41.16 & 37.48 & 35.52 & 32.19 & 29.46 & 24.42 & -20.43 \\
  FSLL~\cite{FSLL}       & 66.48 & 61.75 & 58.16 & 54.16 & 51.10 & 48.53 & 46.54 & 44.20 & 42.28 & -2.57\\
  FSLL+SS~\cite{FSLL}    & \textbf{68.85} & 63.14 & 59.24 & 55.23 & 52.24 & 49.65 & 47.74 & 45.23 & 43.92 & -0.93\\
  IDLVQ-C~\cite{IDLVQC}  & 64.77 & 59.87 & 55.93 & 52.62 & 49.88 & 47.55 & 44.83 & 43.14 & 41.84 & -3.01\\
  \midrule
  \textbf{F2M}           & 67.28 & \textbf{63.80} & \textbf{60.38} & \textbf{57.06} & \textbf{54.08} & \textbf{51.39} & \textbf{48.82} & \textbf{46.58} & \textbf{44.65} & \textbf{-0.20}\\
  \bottomrule
\end{tabular}
}
\end{table*}

\begin{table*}[h]
\renewcommand\arraystretch{1.0}
\caption{Classification accuracy on CUB-200-2011 for 10-way 5-shot incremental learning.\textsuperscript{$\ast$} indicates our re-implementation.} 
\label{tab_cub}
\centering
\resizebox{\textwidth}{!}{
\begin{tabular}{lcccccccccccc}
  \toprule
  \multirow[m]{2}{*}{\textbf{Method}} & \multicolumn{11}{c}{\textbf{sessions}} & \multirow[m]{2}{*}{\shortstack{\textbf{The gap}\\ \textbf{with cRT}}}\\
  \cmidrule[0.05em]{2-12}
  & 1 & 2 & 3 & 4 & 5 & 6 & 7 & 8 & 9 & 10 & 11\\
  \midrule
  cRT~\cite{cRT}\textsuperscript{$\ast$}      & 80.83 & 78.51 & 76.12 & 73.93 & 71.46 & 68.96 & 67.73 & 66.75 & 64.22 & 62.53 & 61.08 & -\\
  Joint-training\textsuperscript{$\ast$}         & 80.83 & 77.57 & 74.11 & 70.75 & 68.52 & 65.97 & 64.58 & 62.22 & 60.18 & 58.49 & 56.78 & -4.30 \\
  Baseline                                    & 80.87 & 77.15 & 74.46 & 72.26 & 69.47 & 67.18 & 65.62 & 63.68 & 61.30 & 59.72 & 58.12 & -2.96\\
  \midrule
  iCaRL~\cite{icarl}\textsuperscript{$\ast$}  & 79.58 & 67.63 & 64.17 & 61.80 & 58.10 & 55.51 & 53.34 & 50.89 & 48.62 & 47.34 & 45.60 & -15.48\\
  Rebalance~\cite{NCM}\textsuperscript{$\ast$}& 80.94 & 70.32 & 62.96 & 57.19 & 51.06 & 46.70 & 44.03 & 40.15 & 36.75 & 34.88 & 32.09 & -28.99\\
  FSLL~\cite{FSLL}\textsuperscript{$\ast$}    & 80.83 & 77.38 & 72.37 & 71.84 & 67.51 & 65.30 & 63.75 & 61.16 & 59.05 & 58.03 & 55.82 & -5.26\\
  iCaRL~\cite{icarl}     & 68.68 & 52.65 & 48.61 & 44.16 & 36.62 & 29.52 & 27.83 & 26.26 & 24.01 & 23.89 & 21.16 & -39.92\\
  Rebalance~\cite{NCM}   & 68.68 & 57.12 & 44.21 & 28.78 & 26.71 & 25.66 & 24.62 & 21.52 & 20.12 & 20.06 & 19.87 & -41.21\\
  TOPIC~\cite{TOPIC}     & 68.68 & 62.49 & 54.81 & 49.99 & 45.25 & 41.40 & 38.35 & 35.36 & 32.22 & 28.31 & 26.28 & -34.80\\
  FSLL~\cite{FSLL}       & 72.77 & 69.33 & 65.51 & 62.66 & 61.10 & 58.65 & 57.78 & 57.26 & 55.59 & 55.39 & 54.21 & -6.87\\
  FSLL+SS~\cite{FSLL}    & 75.63 & 71.81 & 68.16 & 64.32 & 62.61 & 60.10 & 58.82 & 58.70 & 56.45 & 56.41 & 55.82 & -5.26\\
  IDLVQ-C~\cite{IDLVQC}  & 77.37 & 74.72 & 70.28 & 67.13 & 65.34 & 63.52 & 62.10 & 61.54 & 59.04 & 58.68 & 57.81 & -3.27\\
  \midrule
  \textbf{F2M}                    & \textbf{81.07} & \textbf{78.16} & \textbf{75.57} & \textbf{72.89} & \textbf{70.86} & \textbf{68.17} & \textbf{67.01} & \textbf{65.26} & \textbf{63.36} & \textbf{61.76} & \textbf{60.26} & \textbf{-0.82}\\
  \bottomrule
\end{tabular}
}
\end{table*}

\textbf{Datasets.} For CIFAR-100 and \textit{mini}ImageNet, we randomly select 60 classes as the base classes and the remaining 40 classes as the new classes. In each incremental learning session, we construct 5-way 5-shot tasks by randomly picking 5 classes and sampling 5 examples for each class. For CUB-200-2011 with $200$ classes, we select $100$ classes as the base classes and $100$ classes as the new ones. We test 10-way 5-shot tasks on this dataset.

\textbf{Baselines.}
We compare our method F2M with 8 methods: the Baseline proposed in Sec.~\ref{sec:analysis}, a joint-training method that uses all previously seen data including the base and the following few-shot tasks for training, the classifier re-training method (cRT)~\cite{cRT} for long-tailed classification trained with all encountered data, iCaRL~\cite{icarl}, Rebalance~\cite{NCM}, TOPIC~\cite{TOPIC}, FSLL~\cite{FSLL}, and IDLVQ-C~\cite{IDLVQC}. For a fair comparison, we re-implement cRT~\cite{cRT}, iCaRL~\cite{icarl}, Rebalance~\cite{NCM}, FSLL~\cite{FSLL}, and the joint-training method and tune them to their best performance. We also provide the results reported in the original papers for comparison. The results of TOPIC~\cite{TOPIC} and IDLVQ-C~\cite{IDLVQC} are copied from the original papers.
Note that for IL, joint-training is naturally the upper bound of incremental learning algorithms, however, for IFL, joint-training is not a good approximation of the upper bound 
because data imbalance makes the model perform significantly poorer on new classes (long-tailed classes).
To address the data imbalance issue, we re-implement 
the cRT method as the \textit{approximate upper bound}.

\textbf{Experimental details.} The experiments are conducted with NVIDIA GPU RTX3090 on CUDA 11.0. We randomly split each dataset into multiple tasks (sessions). For each dataset (with a fixed split), we run each algorithm for 10 times
and report the mean accuracy.
We adopt ResNet18~\cite{he2016deep} as the backbone network. 
For data augmentation, we use standard random crop and horizontal flip. In the base training stage, we select the last 4 or 8 convolution layers to inject noise, because these layers output higher-level feature representations. The flat region bound $b$ is set as 0.01. We set the number of times for noise sampling as $M= 2\sim4$, since a larger $M$ will increase the training time.
In each incremental few-shot learning session, the total number of training epochs is 6, and the learning rate is 0.02. To verify the correctness of our implementation, we conduct experiments on incremental learning and compare our results to those reported on CIFAR-100 in Appendix~A.3. More experiment details are provided in Appendix~A.2. 

\subsection{Comparison with the State-of-the-Art}
\textbf{F2M outperforms the state-of-the-art methods}. The main results on CIFAR-100, miniImageNet and CUB-200-2011 are presented in Table~\ref{tab:cifar}, Table~\ref{tab_miniImageNet} and Table~\ref{tab_cub} respectively. Based on the experiment results, we have the following observations: \textbf{1)} The Baseline introduced in Sec. \ref{sec:analysis} outperforms the state-of-the-art approaches on all incremental sessions. \textbf{2)} As expected, cRT consistently outperforms the Baseline up to $1$\% to $3$\% by considering the data imbalance problem and applying proper techniques to tackle the long-tailed classification problem to improve performance. Hence, it is reasonable to use cRT as the approximate upper bound of IFL. \textbf{3)} Our F2M outperforms the state-of-the-art methods and the Baseline. Moreover, the performance of F2M is very close to the approximate upper bound, i.e., the gap with cRT is only $0.2$\% in the last session on \emph{mini}ImageNet.
The results show that even with strong constraints  \cite{NCM,icarl,FSLL} and saved examplars of base classes \cite{NCM,icarl,IDLVQC}, current methods cannot effectively address the catastrophic forgetting problem. In contrast, finding flat minima seems a promising approach to overcome this harsh problem.

\subsection{Ablation Study and Analysis} \label{subsec:ablation}
\begin{table*}[t]
\renewcommand\arraystretch{1.0}
\caption{Comparison of the flatness of the local minima found by the Baseline and our F2M.} 
\label{tab:flatness}
\centering
\resizebox{0.7\textwidth}{!}{
\begin{tabular}{lccccc}
  \toprule
  \multirow[m]{2}{*}{\textbf{Method}} & \multicolumn{2}{c}{Indicator $I$} & \multicolumn{2}{c}{Variance $\sigma^2$} \\
  \cmidrule[0.05em]{2-6}
  & Training Set & Testing Set & Training Set & Testing Set&\\
  \midrule
  Baseline       & 0.2993 & 0.4582 & 0.1451 & 0.2395 \\
  \textbf{F2M}   & \textbf{0.0506} & \textbf{0.0800} & \textbf{0.0296} & \textbf{0.0334} \\
  \bottomrule
\end{tabular}
}
\end{table*}

\textbf{Analysis on the flatness of local minima.} Here, we verify that our method can find a more flat local minima than the Baseline. For a found local minima $\theta^\star$, we measure its flatness as follows. We sample the noise for $1000$ times. For each time, we inject the sampled noise to $\theta^\star$ and calculate the loss $\mathcal{L}_i$. Then, we adopt the indicator $I=\frac{1}{1000}\sum_{i=1}^{1000}(\mathcal{L}_i-\mathcal{L}^{*})^2$ and variance $\sigma^2 = \frac{1}{1000}\sum_{i=1}^{1000}(\mathcal{L}_i-\overline{\mathcal{L}})^2$ to measure the flatness. $\mathcal{L}^{*}$ denotes the loss of $\theta^\star$, and $\overline{\mathcal{L}}$ denotes the average loss of $\{\mathcal{L}_i\}_{i=1}^{1000}$. The values of the indicator and variance of F2M and the Baseline are presented in Table~\ref{tab:flatness}, which clearly demonstrate that our method can find a more flat local minima. 

\begin{table*}[t]
\renewcommand\arraystretch{1.0}
\caption{Ablation study of our F2M on CIFAR-100. 
{PD} refers to the performance dropping rate.} 
\label{table:ablation}
\centering
\resizebox{0.92\textwidth}{!}{
\begin{tabular}{cccccccccccccc}
  \toprule
  \multirow[m]{2}{*}{\textbf{FM}} & \multirow[m]{2}{*}{\textbf{PF}} & \multirow[m]{2}{*}{\textbf{PC}} & \multirow[m]{2}{*}{\textbf{PN}} & \multicolumn{9}{c}{\textbf{sessions}} & \multirow[m]{2}{*}{\textbf{PD}
  $\downarrow$}\\
  \cmidrule[0.05em]{5-13}
  & & & & 1 & 2 & 3 & 4 & 5 & 6 & 7 & 8 & 9 & \\
  \midrule
                &            &            &            & \textbf{65.18} & 60.83 & 53.13 & 43.57 & 23.75 & 10.76 & 08.26 & 07.24 & 06.45 & 58.73\\
                &            & \checkmark &            & \textbf{65.18} & 59.48 & 56.77 & 52.99 & 50.09 & 47.80 & 45.92 & 44.20 & 42.55 & 22.63\\    
     \checkmark & \checkmark &            & \checkmark & 64.71 & 59.54 & 53.03 & 45.09 & 41.68 & 39.04 & 38.64 & 37.19 & 36.01 & 28.70 \\
     \checkmark &            & \checkmark & \checkmark & 64.55 & 61.27 & 58.33 & 54.82 & 51.60 & 49.22 & 47.48 & 45.78 & 44.08 & 20.47\\
     \checkmark & \checkmark & \checkmark &            & 64.71 & 61.75 & 58.80 & 55.33 & 52.27 & 49.75 & 47.72 & 46.01 & 44.43 & 20.28\\
  \midrule
     \checkmark & \checkmark & \checkmark & \checkmark & 64.71 & \textbf{61.99} & \textbf{58.99} & \textbf{55.58} & \textbf{52.55} & \textbf{49.96} & \textbf{48.08} & \textbf{46.28} & \textbf{44.67} & \textbf{20.04}\\
  \bottomrule
\end{tabular}
}
\end{table*}

\textbf{Ablation study on the designs of our method.}
Here, we study the effectiveness of each design of our method, including adding noise to the model parameters for finding $b$-flat local minima ({FM}) during the base training session, the prototype fixing term ({PF}) used in the base training objective (Eq.~\ref{eq:base_loss}), parameter clamping ({PC}) during incremental learning, and prototype normalization ({PN}).
We conduct an ablation study by removing each component in turn and report the experimental results in Table~\ref{table:ablation}.

\noindent
\emph{Finding $b$-flat local minima.} 
Standard supervised training with SGD as the optimizer tends to converge to a sharp local minima. It leads to a significant drop in performance because the loss changes quickly in the neighborhood of the sharp local minima. As shown in Table~\ref{table:ablation}, even with parameter clamping during incremental learning, 
the performance still drops significantly.
In contrast, restricting the parameters in a small flat region can mitigate the forgetting problem.

\noindent
\emph{Prototype fixing.}
Without fixing the prototypes after injecting noise to selected layers during the process of finding local minima, i.e. removing the second term of Eq.~\ref{eq:base_loss}, it is still possible to tune the model within the flat region to well separate base classes. However, the saved prototypes of base classes will become less accurate because 
the embeddings of the base samples suffer from semantic drift~\cite{Yu2020SemanticDC}. 
As shown in Table~\ref{table:ablation}, it results in a performance drop of nearly $0.6$\%.

\noindent
\emph{Parameter clamping.}
Parameter clamping restricts the model parameters to the ${b}$-flat region after incremental few-shot learning. Outside the ${b}$-flat region, the performance drops quickly. It can be seen from Table~\ref{table:ablation} that removing parameter clamping leads to a significant drop in performance.

\noindent
\emph{Prototype normalization.} 
As mentioned in Sec.~\ref{sec:incremental}, we normalize the class prototypes to calibrate the estimates of the class mean classifier. 
The results in Table~\ref{table:ablation} show the effectiveness of normalization, which helps to further improve the performance.

\textbf{Study of the flat region bound $b$.} We study the effect of the flat region bound $b$ for 5-way 5-shot incremental learning on CIFAR-100. We report the test accuracy in session 1 (base session) and session 9 (last session) w.r.t. different $b$ in Table~\ref{tab:bound}. It can be seen that the best results are achieved for $b\in[0.005, 0.02]$. A larger $b$ (e.g., 0.04 or 0.08) leads to a significant performance drop on base classes, even for those in session 1, indicating that there may not exist a large flat region around a good local minima. Meanwhile, a smaller $b$ (e.g., 0.0025) results in a performance decline on new classes, due to the overly small capacity of the flat region. This illustrates the trade-off effect of $b$.

\begin{table*}[t]
\renewcommand\arraystretch{1.0}
\caption{Study of the flat region bound $b$ for 5-way 5-shot incremental learning on CIFAR-100. The top 3 results in each row are in boldface.} 
\label{tab:bound}
\centering
\resizebox{0.8\textwidth}{!}{
\begin{tabular}{lcccccc}
  \toprule
  \multirow[m]{2}{*}{\textbf{Session}} & \multicolumn{6}{c}{The hyperparameter $b$} \\
  \cmidrule[0.05em]{2-7}
  & 0.0025 & 0.005 & 0.01 & 0.02 & 0.04 & 0.08\\
  \midrule
  Session 1 (60 bases classes)       & \textbf{64.85} & 64.67 & \textbf{64.81} & \textbf{64.71} & 63.30 & 62.25 \\
  Session 9 (All 100 classes)            & 44.16 & \textbf{44.54} & \textbf{44.58} & \textbf{44.67} & 43.75 & 43.04 \\
  Session 9 (60 base classes)        & \textbf{59.58} & \textbf{59.69} & \textbf{59.73} & 59.44 & 58.38 & 57.21 \\
  Session 9 (40 new classes)         & 21.03 & \textbf{21.81} & \textbf{21.86} & \textbf{22.52} & 21.80 & 21.77 \\
  \bottomrule
\end{tabular}
}
\end{table*}

\section{Conclusion}\label{sec:conclusion}
 We have proposed a novel approach to overcome catastrophic forgetting in incremental few-shot learning by finding flat local minima of the objective function in the base training stage and then fine-tuning the model within the flat region on new tasks. Extensive experiments on benchmark datasets show that our model can effectively mitigate catastrophic forgetting and adapt to new classes.
 A limitation of our method is that it may not be suitable for medium- or high-shot tasks, since the flat region is relatively small, which limits the model capacity. However, it is still possible to adapt our core idea for incremental learning. For example, one can search for a less flat but wider local minima region in the base training stage and tune the model within this region during incremental learning sessions, where previous techniques such as elastic weight consolidation (EWC)~\citep{kirkpatrick2017overcoming} can be used to constraint the model parameters. This could be an interesting direction for future research.

\newpage
\begin{ack}
We would like to thank the anonymous reviewers for their insightful and helpful comments. This research was supported by the grant of DaSAIL project P0030935 funded by PolyU/UGC.
\end{ack}
\bibliographystyle{plain}

\section*{Checklist}

\begin{enumerate}

\item For all authors...
\begin{enumerate}
  \item Do the main claims made in the abstract and introduction accurately reflect the paper's contributions and scope?
    \answerYes{}
  \item Did you describe the limitations of your work?
    \answerYes{See Section~\ref{sec:conclusion}.}
  \item Did you discuss any potential negative societal impacts of your work?
    \answerNA{}
  \item Have you read the ethics review guidelines and ensured that your paper conforms to them?
    \answerYes{}
\end{enumerate}

\item If you are including theoretical results...
\begin{enumerate}
  \item Did you state the full set of assumptions of all theoretical results?
    \answerYes{See Section~\ref{sec:theory}.}
	\item Did you include complete proofs of all theoretical results?
    \answerYes{See Appendix~A.1.}
\end{enumerate}

\item If you ran experiments...
\begin{enumerate}
  \item Did you include the code, data, and instructions needed to reproduce the main experimental results (either in the supplemental material or as a URL)?
    \answerYes{The link to the source code is provided in the abstract.}
  \item Did you specify all the training details (e.g., data splits, hyperparameters, how they were chosen)?
    \answerYes{See Section~\ref{sec:exp_setup} and Appendix~A.2.}
	\item Did you report error bars (e.g., with respect to the random seed after running experiments multiple times)?
    \answerYes{See Appendix~A.3.}
	\item Did you include the total amount of compute and the type of resources used (e.g., type of GPUs, internal cluster, or cloud provider)?
    \answerYes{See Section~\ref{sec:exp_setup}}
\end{enumerate}

\item If you are using existing assets (e.g., code, data, models) or curating/releasing new assets...
\begin{enumerate}
  \item If your work uses existing assets, did you cite the creators?
    \answerYes{We have cited~\cite{krizhevsky2009learning,vinyals2016matching,welinder2010caltech}.}
  \item Did you mention the license of the assets?
    \answerNA{}
  \item Did you include any new assets either in the supplemental material or as a URL?
    \answerNA{}
  \item Did you discuss whether and how consent was obtained from people whose data you're using/curating?
    \answerNA{}
  \item Did you discuss whether the data you are using/curating contains personally identifiable information or offensive content?
    \answerNA{}
\end{enumerate}

\item If you used crowdsourcing or conducted research with human subjects...
\begin{enumerate}
  \item Did you include the full text of instructions given to participants and screenshots, if applicable?
    \answerNA{}
  \item Did you describe any potential participant risks, with links to Institutional Review Board (IRB) approvals, if applicable?
    \answerNA{}
  \item Did you include the estimated hourly wage paid to participants and the total amount spent on participant compensation?
    \answerNA{}
\end{enumerate}

\end{enumerate}

\newpage

\appendix
\section{Appendix}
\subsection{Proof of Theorem 4.1} \label{appendix:proof}
\begin{lemma}\label{lemma:pre}
By Assumption~\ref{assump:L-smooth} and \ref{assump:gradient}, we have
\begin{equation}\label{eq:lemma}
    \mathbb{E}_{z_k,\epsilon_{j}}[R(\theta_{k+1})] - R(\theta_k) \leq -\alpha_k\frac{2M-\alpha_k L(m_2+M)}{2M}\|\nabla R(\theta_k)\|_2^2  + \frac{\alpha_k^2L m_1}{2M}.
\end{equation}
\end{lemma}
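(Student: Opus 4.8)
The plan is to carry out the standard one-step (descent) analysis of SGD, adapted to the noise-averaged gradient estimator in the update rule~(\ref{eq:base_update}). First I would apply the $L$-smoothness of $R$ (Assumption~\ref{assump:L-smooth}) in its quadratic-upper-bound form,
\[
  R(\theta_{k+1}) \le R(\theta_k) + \nabla R(\theta_k)^{T}(\theta_{k+1}-\theta_k) + \frac{L}{2}\,\|\theta_{k+1}-\theta_k\|_2^2 ,
\]
and substitute $\theta_{k+1}-\theta_k = -\frac{\alpha_k}{M}\sum_{j=1}^{M} g(z_k;\phi_k+\epsilon_j,\psi_k) =: -\alpha_k\,\bar g_k$, which yields
\[
  R(\theta_{k+1}) \le R(\theta_k) - \alpha_k\,\nabla R(\theta_k)^{T}\bar g_k + \frac{\alpha_k^2 L}{2}\,\|\bar g_k\|_2^2 .
\]

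Next I would take the conditional expectation $\mathbb{E}_{z_k,\{\epsilon_j\}}[\,\cdot\mid\theta_k]$ of both sides. For the linear term, Condition~2 of Assumption~\ref{assump:gradient} gives $\mathbb{E}[\bar g_k]=\nabla R(\theta_k)$, so its contribution is exactly $-\alpha_k\|\nabla R(\theta_k)\|_2^2$. For the quadratic term I would use the bias--variance identity $\mathbb{E}\|\bar g_k\|_2^2 = \|\mathbb{E}\bar g_k\|_2^2 + \mathbb{V}[\bar g_k] = \|\nabla R(\theta_k)\|_2^2 + \mathbb{V}[\bar g_k]$, and then bound the variance: since the $M$ noise vectors $\epsilon_1,\dots,\epsilon_M$ are drawn independently, the summands $g(z_k;\phi_k+\epsilon_j,\psi_k)$ are (conditionally on $\theta_k$) uncorrelated unbiased estimators of $\nabla R(\theta_k)$, hence $\mathbb{V}[\bar g_k] = \frac{1}{M}\,\mathbb{V}[g(z_k;\phi_k+\epsilon_1,\psi_k)] \le \frac{1}{M}\bigl(m_1 + m_2\|\nabla R(\theta_k)\|_2^2\bigr)$ by Condition~3. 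Plugging both estimates into the inequality above and collecting the coefficient of $\|\nabla R(\theta_k)\|_2^2$ gives
\[
  \mathbb{E}_{z_k,\epsilon_j}[R(\theta_{k+1})] - R(\theta_k) \le -\Bigl(\alpha_k - \frac{\alpha_k^2 L}{2} - \frac{\alpha_k^2 L m_2}{2M}\Bigr)\|\nabla R(\theta_k)\|_2^2 + \frac{\alpha_k^2 L m_1}{2M} ,
\]
and rewriting $\alpha_k - \frac{\alpha_k^2 L}{2} - \frac{\alpha_k^2 L m_2}{2M} = \alpha_k\,\frac{2M - \alpha_k L(M+m_2)}{2M}$ recovers exactly Eq.~(\ref{eq:lemma}).

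The substitution and the algebraic rearrangement are routine; the step that requires care is the variance bound on the averaged estimator $\bar g_k$. The factor $1/M$ appearing in Eq.~(\ref{eq:lemma}) is produced precisely by averaging $M$ \emph{independent} estimates, so I must use that the $M$ gradient evaluations are mutually uncorrelated conditionally on $\theta_k$, which I would make explicit as part of the sampling model (independence of the draws feeding each $g_j$); had they merely shared a common batch $z_k$ without independent resampling, the cross-covariance terms would survive and one would only obtain the weaker bound with $m_1,m_2$ in place of $m_1/M,\,m_2/M$. I would also note that this lemma uses only Assumptions~\ref{assump:L-smooth} and~\ref{assump:gradient}; the extra twice-differentiability of $R$ and $L_2$-smoothness of $\|\nabla R\|_2^2$ assumed in Theorem~\ref{eq:theorem} are not needed here and are invoked only afterwards, to turn the summed version of Eq.~(\ref{eq:lemma}) together with Assumption~\ref{assump:lr} into $\lim_{k\to\infty}\mathbb{E}[\|\nabla R(\theta_k)\|_2^2]=0$.
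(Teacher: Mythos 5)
Your proposal is correct and follows essentially the same route as the paper's proof: the $L$-smoothness quadratic upper bound, substitution of the update rule, conditional expectation with Condition~2 for the linear term, and the bias--variance identity with Condition~3 for $\mathbb{E}\|\bar g\|_2^2$. The only difference is in how the $1/M$ variance reduction is justified --- the paper asserts $\mathbb{V}[\bar g]\le \mathbb{V}[g]/M$ by appealing to sampling the noise without replacement and a textbook reference, whereas you derive it from conditional uncorrelatedness of the $M$ draws; your caveat that cross-covariance terms would survive if only the noise (and not the data) were resampled is a fair point that the paper's own argument also glosses over.
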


\begin{proof}
By Assumption~\ref{assump:L-smooth}, an important consequence is that for all $\{\theta, \theta'\} \subset \mathbb{R}^d$, it satisfies that
\begin{equation}
    R(\theta) \leq R(\theta’) + \nabla R(\theta')^T(\theta-\theta') + \frac{1}{2}L\|\theta-\theta'\|_2^2. 
\end{equation}
Taken together, the above inequality and the parameter update equation (Eq.~\ref{eq:base_update}), it yields 
\begin{equation}\label{eq:substact}
    R(\theta_{k+1}) - R(\theta_k)  \leq \nabla R(\theta_k)^{T}(\theta_{k+1}-\theta_k) + \frac{1}{2}L\|\theta_{k+1} - \theta_{k}\|_2^{2} \\
                                   \leq - \alpha_k\nabla R(\theta_k)^{T}\overline{g} + \frac{\alpha_k^2 L}{2}\|\overline{g}\|_2^2,
\end{equation}

where $\overline{g}=\frac{1}{M}\sum_{j=1}^{M}g(z_k;\phi_k +\epsilon_j, \psi_k)$. Taking expectation on both sides of Eq.~\ref{eq:substact}, it yields
\begin{equation}\label{eq:E_substact}
    \mathbb{E}_{z_k,\epsilon_{j}}[R(\theta_{k+1})] - R(\theta_k) \leq - \alpha_k\nabla R(\theta_k)^{T}\mathbb{E}_{z_k,\epsilon_{j}}[\overline{g}] + \frac{\alpha_k^2 L}{2}\mathbb{E}_{z_k,\epsilon_{j}}[\|\overline{g}\|_2^2].
\end{equation}
$\mathbb{E}_{z_{k},\epsilon_j}[\cdot]$ denotes the expectation w.r.t. the joint distribution of random variables $z_k$ and $\epsilon_j$ given $\theta_k$. Note that $\theta_{k+1}$ (not $\theta_k$) depends on $z_k$ and $\epsilon_j$. Under Condition 2 of Assumption~\ref{assump:gradient}, the expectation of $\overline{g}$ satisfies that
\begin{equation}\label{eq:E}
    \mathbb{E}_{z_k,\epsilon_{j}}[\overline{g}] = \frac{1}{M}\sum_{j=1}^{M}\mathbb{E}_{z_k,\epsilon_{j}}[g(z_k;\phi_k +\epsilon_j, \psi_k)] = \nabla R(\theta_k).
\end{equation}
Assume that we sample the noise vector $\epsilon_j$ from $P(\epsilon)$ \emph{without replacement}. Under Condition~3 of Assumption~\ref{assump:gradient}, we have (see [1, p. 183])
\begin{equation}\label{eq:V}
    \mathbb{V}_{z_{k},\epsilon_j}[\overline{g}] \leq \frac{ \mathbb{V}_{z_{k},\epsilon_j}[g(z_k;\phi_k +\epsilon_j, \psi_k)]}{M} \leq \frac{m_1}{M} + \frac{m_2}{M}\|\nabla R(\theta_k)\|_2^2. 
\end{equation}
Taken together,  Eq.~\ref{eq:E} and Eq.~\ref{eq:V}, one obtains
\begin{equation}\label{eq:E2}
    \mathbb{E}_{z_k,\epsilon_{j}}[\|\overline{g}\|_2^2] = \mathbb{V}_{z_{k},\epsilon_j}[\overline{g}] + \|\mathbb{E}_{z_k,\epsilon_{j}}[\overline{g}]\|_2^2 \leq \frac{m_1}{M} + \frac{m_2+M}{M}\|\nabla R(\theta_k)\|_2^2. 
\end{equation}
Therefore, by Eq.~\ref{eq:E_substact}, \ref{eq:E} and \ref{eq:E2}, it yields
\begin{equation}
    \mathbb{E}_{z_k,\epsilon_{j}}[R(\theta_{k+1})] - R(\theta_k) \leq -\alpha_k\frac{2M-\alpha_k L(m_2+M)}{2M}\|\nabla R(\theta_k)\|_2^2  + \frac{\alpha_k^2L m_1}{2M}.
\end{equation}
\end{proof}

\begin{lemma}\label{lemma:liminf}
By Assumption~\ref{assump:L-smooth},~\ref{assump:gradient} and~\ref{assump:lr}, we have
\begin{equation}
    \liminf_{k\to\infty}\mathbb{E}[\|\nabla R(\theta_k)\|_2^2]=0.
\end{equation}
\end{lemma}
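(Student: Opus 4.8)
The plan is to combine Lemma~\ref{lemma:pre} with the step-size restrictions of Assumption~\ref{assump:lr} and the lower bound on $R$ from Condition~1 of Assumption~\ref{assump:gradient}, using the classical telescoping/summability argument for stochastic gradient methods. First I would take the total expectation of the inequality in Lemma~\ref{lemma:pre} over the whole history via the tower property, turning the per-step conditional bound into a deterministic recursion for the sequence $\mathbb{E}[R(\theta_k)]$. Since $\sum_k \alpha_k^2 < \infty$ forces $\alpha_k \to 0$, there is an index $k_0$ such that $\alpha_k L(m_2+M) \le M$ for all $k \ge k_0$, and then the coefficient $\frac{2M - \alpha_k L(m_2+M)}{2M} \ge \tfrac12$, giving for $k \ge k_0$
\[
    \mathbb{E}[R(\theta_{k+1})] - \mathbb{E}[R(\theta_k)] \;\le\; -\frac{\alpha_k}{2}\,\mathbb{E}[\|\nabla R(\theta_k)\|_2^2] + \frac{\alpha_k^2 L m_1}{2M}.
\]

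Next I would sum this inequality from $k_0$ to an arbitrary $K$, telescoping the left-hand side, and invoke $\mathbb{E}[R(\theta_{K+1})] \ge R^\star$ from Condition~1 of Assumption~\ref{assump:gradient}. Rearranging yields
\[
    \frac12 \sum_{k=k_0}^{K}\alpha_k\,\mathbb{E}[\|\nabla R(\theta_k)\|_2^2] \;\le\; \mathbb{E}[R(\theta_{k_0})] - R^\star + \frac{L m_1}{2M}\sum_{k=k_0}^{K}\alpha_k^2.
\]
Letting $K \to \infty$, the right-hand side remains finite because $\sum_k \alpha_k^2 < \infty$ (and $\mathbb{E}[R(\theta_{k_0})]$ is finite), so I conclude $\sum_{k}\alpha_k\,\mathbb{E}[\|\nabla R(\theta_k)\|_2^2] < \infty$.

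Finally I would argue by contradiction for the $\liminf$ claim: if $\liminf_{k\to\infty}\mathbb{E}[\|\nabla R(\theta_k)\|_2^2] = c > 0$, then $\mathbb{E}[\|\nabla R(\theta_k)\|_2^2] \ge c/2$ for all sufficiently large $k$, so by Assumption~\ref{assump:lr},
\[
    \sum_{k}\alpha_k\,\mathbb{E}[\|\nabla R(\theta_k)\|_2^2] \;\ge\; \frac{c}{2}\sum_{k}\alpha_k \;=\; \infty,
\]
contradicting the finiteness established above. Hence $\liminf_{k\to\infty}\mathbb{E}[\|\nabla R(\theta_k)\|_2^2] = 0$, which is the statement of Lemma~\ref{lemma:liminf}.

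I expect the main obstacle to be the careful bookkeeping of the nested expectations — converting the conditional (given $\theta_k$) inequality of Lemma~\ref{lemma:pre} into a clean statement about unconditional expectations, and ensuring all the quantities involved (notably $\mathbb{E}[R(\theta_{k_0})]$) are finite so that the telescoped bound is meaningful. Everything else is the standard Robbins--Monro-type summability manipulation. Note that this argument only delivers the $\liminf$; upgrading it to the full limit in Theorem~\ref{eq:theorem} requires the additional hypothesis that $\|\nabla R(\theta)\|_2^2$ is itself $L_2$-smooth, which is a separate subsequent step.
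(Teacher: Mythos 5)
Your proposal is correct and follows essentially the same route as the paper: take total expectations of Lemma~\ref{lemma:pre}, use $\alpha_k\to 0$ to bound the coefficient below by $\tfrac12$, telescope against the lower bound $R^\star$, and deduce $\sum_k \alpha_k\,\mathbb{E}[\|\nabla R(\theta_k)\|_2^2]<\infty$. The only (cosmetic) difference is the last step, where you extract the $\liminf$ by contradiction using $\sum_k\alpha_k=\infty$, whereas the paper phrases it as the $\alpha_k$-weighted average of $\mathbb{E}[\|\nabla R(\theta_k)\|_2^2]$ tending to zero; both are standard and equivalent, and your version is if anything slightly more careful (you also correctly attribute $\alpha_k\to 0$ to $\sum_k\alpha_k^2<\infty$).
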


\begin{proof}
The first condition in Assumption~\ref{assump:lr} ensures that $\lim_{k\to\infty}\alpha_k=0$. Without loss of generality, we assume that for any $k\in\mathbb{N}$, $\alpha_k L(m_2+M) \leq M$. Denote by $\mathbb{E}[\cdot]$ the \emph{total expectation} w.r.t. all involved random variables. For example, $\theta_k$ is determined by the set of random variables $\{z_0, z_1,...,z_{k-1}, \epsilon_0, \epsilon_1,...,\epsilon_{k-1}\}$, and therefore the \emph{total expectation} of $R(\theta_k)$ is given by
\begin{equation}
    \mathbb{E}[R(\theta_k)]=\mathbb{E}_{z_0,\epsilon_0}\mathbb{E}_{z_1,\epsilon_1}...\mathbb{E}_{z_{k-1},\epsilon_{k-1}}[R(\theta_k)].
\end{equation}

Taking total expectation 
on both sides of Eq.\ref{eq:lemma}, we have
\begin{equation}
    \mathbb{E}[R(\theta_k+1)]-\mathbb{E}[R(\theta_k)] \leq -\frac{\alpha_k}{2}\mathbb{E}[\|\nabla R(\theta_k)\|_2^2] + \frac{\alpha_k^2L m_1}{2M}.
\end{equation}
For $k=0,1,2,...,K$, summing both sides of this inequality yields
\begin{equation}
    R^\star - \mathbb{E}[R(\theta_1)] \leq \mathbb{E}[R(\theta_{K+1})] - \mathbb{E}[R(\theta_{0})] \leq -\frac{1}{2}\sum_{k=0}^K\alpha_k\mathbb{E}[\|\nabla R(\theta_k)\|_2^2] + \frac{L m_1}{2M}\sum_{k=0}^K \alpha_k^2,
\end{equation}
where $R^\star$ is the lower bound in Condition 1 of Assumption~\ref{assump:gradient}. Rearranging the term gives
\begin{equation}
    \sum_{k=0}^K\alpha_k\mathbb{E}[\|\nabla R(\theta_k)\|_2^2] \leq 2(\mathbb{E}[R(\theta_1)]-R^\star) + \frac{L m_1}{M}\sum_{k=0}^K \alpha_k^2.
\end{equation}
By the second condition of Assumption~\ref{assump:lr}, we have
\begin{equation}\label{eq:infty}
    \lim_{K\to\infty}\mathbb{E}[\sum_{k=0}^K\alpha_k\|\nabla R(\theta_k)\|_2^2] \leq 2(\mathbb{E}[R(\theta_0)]-R^\star) + \lim_{K\to\infty}\frac{L m_1}{M}\sum_{k=0}^K \alpha_k^2 < \infty.
\end{equation}
Dividing both sides of Eq.~\ref{eq:infty} by $\sum_{k=1}^K\alpha_k$ and by the first condition of Assumption~\ref{assump:lr}, we have
\begin{equation}
    \lim_{K\to\infty}\mathbb{E}[\frac{\sum_{k=1}^K\alpha_k\|\nabla R(\theta_k)\|_2^2}{\sum_{k=1}^K\alpha_k}] = 0.
\end{equation}
The left-hand term of this equation is the weighed average of $\|\nabla R(\theta_k)\|_2^2$, and $\{\alpha_k\}$ are the weights. Hence, a direct consequence of this equation is that $\|\nabla R(\theta_k)\|_2^2$ cannot asymptotically stay far from zero, i.e.
\begin{equation}\label{eq:liminf}
    \liminf_{k\to\infty}\mathbb{E}[\|\nabla R(\theta_k)\|_2^2]=0.
\end{equation}
\end{proof}

We now prove {Theorem~\ref{eq:theorem}}, which is a stronger consequence than Lemma~\ref{lemma:liminf}. 
\begin{customthm}{4.1}\label{eq:theorem_a}
Under assumptions~\ref{assump:L-smooth}, \ref{assump:gradient} and \ref{assump:lr}, we further assume that the risk function $R$ is twice differentiable, and that $\|\nabla R(\theta)\|_2^2$ is $L_2$-smooth with constant $L_2>0$, then we have
\begin{equation}
    \lim_{k\to\infty} \mathbb{E}[\|\nabla R(\theta_k)\|_2^2] = 0.
\end{equation}
\end{customthm}
\begin{proof}
Define $F(\theta)\coloneqq\|R(\theta)\|_2^2$, then we have
\begin{flalign}\label{eq:substact_F}
&\begin{aligned}
    \mathbb{E}_{z_k,\epsilon_{j}}[F(\theta_{k+1})] - F(\theta_k) 
            & \leq \nabla F(\theta_k)^{T}\mathbb{E}_{z_k,\epsilon_{j}}[(\theta_{k+1}-\theta_k)] + \frac{1}{2}L_2\mathbb{E}_{z_k,\epsilon_{j}}[\|\theta_{k+1} - \theta_{k}\|_2^{2}] \\
            & \leq - \alpha_k\nabla F(\theta_k)^{T}\mathbb{E}_{z_k,\epsilon_{j}}[\overline{g}] + \frac{\alpha_k^2 L_2}{2}\mathbb{E}_{z_k,\epsilon_{j}}[\|\overline{g}\|_2^2] \\
            & \leq -2\alpha_k\nabla R(\theta_k)^{T} \nabla^{2}R(\theta_k)^{T}\mathbb{E}_{z_k,\epsilon_{j}}[\overline{g}] + \frac{\alpha_k^2 L_2}{2}\mathbb{E}_{z_k,\epsilon_{j}}[\|\overline{g}\|_2^2] \\
            & \leq 2\alpha_k\|\nabla R(\theta_k)\|_2^2 \|\nabla^{2}R(\theta_k)\|_2\|\mathbb{E}_{z_k,\epsilon_{j}}[\overline{g}]\|_2 + \frac{\alpha_k^2 L_2}{2}\mathbb{E}_{z_k,\epsilon_{j}}[\|\overline{g}\|_2^2] \\
            & \leq 2\alpha_k L\|\nabla R(\theta_k)\|_2^2 + \frac{\alpha_k^2 L_2}{2}(\frac{m_1}{M} + \frac{m_2+M}{M}\|\nabla R(\theta_k)\|_2^2 ).
\end{aligned}&&                                 
\end{flalign}
Taking total expectation of both sides of Eq.~\ref{eq:substact_F} yields
\begin{equation}\label{eq:bound}
    \mathbb{E}[F(\theta_{k+1})] -  \mathbb{E}[F(\theta_k)] \leq 2\alpha_k L\mathbb{E}[\|\nabla R(\theta_k)\|_2^2] + \frac{\alpha_k^2 L_2}{2}(\frac{m_1}{M} + \frac{m_2+M}{M}\mathbb{E}[\|\nabla R(\theta_k)\|_2^2] ).
\end{equation}

Eq.~\ref{eq:infty} implies that $2\alpha_k L\mathbb{E}[\|\nabla R(\theta_k)\|_2^2]$ is the term of a convergent sum. Besides, $\frac{\alpha_k^2 L_2}{2}(\frac{m_1}{M} + \frac{m_2+M}{M}\mathbb{E}[\|\nabla R(\theta_k)\|_2^2] )$ is also the term of a convergent sum, because $\sum_{k=1}^{\infty}\alpha_k^2$ converges. Hence, the bound (Eq.~\ref{eq:bound}) is also the term of a convergent sum.
Now, let us define
\begin{align}
    & A_{K}^{+} = \sum_{k=0}^{K-1}\max(0, \mathbb{E}[F(\theta_{k+1})] -  \mathbb{E}[F(\theta_k)]), \\
    \mathrm{and}\quad & A_{K}^{-} = \sum_{k=0}^{K-1}\max(0, \mathbb{E}[F(\theta_k)]-\mathbb{E}[F(\theta_{k+1})]).
\end{align}
Because the bound of $\mathbb{E}[F(\theta_{k+1})] -  \mathbb{E}[F(\theta_k)]$ is positive and is the term a of convergent sum, and the sequence $A_{K}^{+}$ is upper bounded by the sum of the bound of $\mathbb{E}[F(\theta_{k+1})] -  \mathbb{E}[F(\theta_k)]$, $A_{K}^{+}$ converges. Similarly, $A_{K}^{-}$ also converges. Since for any $K\in\mathbb{N}$, $F(\theta_K) = F(\theta_0) + A_{K}^{+} - A_{K}^{-}$, we can obtain that $F(\theta_k)$ converges. By Lemma~\ref{lemma:liminf} and the fact that $F(\theta_k)$ converges, we have
\begin{equation}
    \lim_{k\to\infty}\mathbb{E}[\|R(\theta_k)\|_2^2] = 0.
\end{equation}
\end{proof}

\subsection{More Experimental Details} \label{appendix:setup}
In CIFAR-100, each class contains 500 images for training and 100 images for test, with each image of size 32×32. In \emph{mini}ImageNet, each class contains 500 training images and 100 test images of size 84×84. CUB-200-2011 contains 5994 training images and 5794 test images in total with varying number of images for each class, and we resize and crop each image to be of size $224\times224$.

Since there lacks a unified standard in storing/saving exemplars for incremental few-shot learning, we choose the setting that we consider most reasonable and practical. In real-world applications, normally there exists a large number of base classes with sufficient training data (e.g., the base dataset is ImageNet-1K~[5]), whereas the number of unseen novel classes that lack training data is relatively small. Therefore, for computational efficiency and efficient use of storage, it is desirable NOT saving any exemplars for base classes but store some exemplars for new classes. In our experiments, we do not store any exemplar for base classes, but save 5 exemplars for each new class. This will hardly cost any storage space or slow down computation considerably due to the small number of new classes.

\emph{To ensure a fair comparison}, for ICaRL~[4] and Rebalance~[2], we store 2 exemplars per class (for both base classes and new classes). As a result, in each session, 
they store more examplars than our method. For our re-implementation of FSLL~[3], we store the same number of exemplars for each new class as in our method. For other approaches, since the code is not available or the method is too complex to re-implement, we directly use the results reported in their paper, which are substantially lower than the Baseline. 

\begin{minipage}{.47\textwidth}
    \centering
    \includegraphics[width = \linewidth]{}
    \captionof{figure}
    {Our re-implementation results of Rebalance and ICaRL are very close to those reported in~[2]. \textsuperscript{$\ast$}~indicates our re-implementation.}
    \label{fig:cifar}
\end{minipage}
\hfill
\begin{minipage}{.47\textwidth}
\centering
  \captionof{table}{
  The average norm of the class prototypes of new classes is significantly smaller than that of old classes. The experiment is conducted on CIFAR-100 with 60 base classes and 40 new classes. 
  }\label{table:norm}
  \resizebox{1.0\textwidth}{!}{
    \begin{tabular}{lcc}
  \toprule
                & Mean & Standard Deviation\\
  \midrule
  Base classes  & 7.97 & 0.63 \\
  New classes   & 7.48 & 0.71 \\
  \bottomrule
\end{tabular}}
\end{minipage}

\subsection{Additional Experiment Results} \label{appendix:addition_exp}
In this section, we conduct experiments to verify the correctness of our re-implementation of state-of-the-art methods including ICaRL\textsuperscript{$\ast$}~[4], Rebalance\textsuperscript{$\ast$}~[2] and FSLL\textsuperscript{$\ast$}~[3]. Note that we re-implement ICaRL and Rebalance because they (and the released codes) are designed for incremental learning, not for incremental few-shot learning. We re-implement FSLL because the code is not provided. In addition, we present empirical evidence on the difference in the norm of the class prototypes between new classes and base classes, which motivates the design of prototype normalization. 

\textbf{Correctness of our implementation.}
To verify the correctness of our implementation of ICaRL\textsuperscript{$\ast$}~[4] and Rebalance\textsuperscript{$\ast$}~[2], we conduct experiments on CIFAR-100 for \emph{incremental learning}. We adopt 32-layer ResNet as backbone and store 20 exemplars per class as in Rebalance~[2]. The comparative results are presented in Fig~\ref{fig:cifar}. It can be seen that
our re-implementation results of ICaRL and Rebalance are very close to those reported in [2]. 

\begin{table*}[t]
\renewcommand\arraystretch{1.0}
\caption{Our re-implementation results of FSLL are very close to those reported in~[3] on CIFAR-100 for 5-way 5-shot incremental learning. \textsuperscript{$\ast$} indicates our re-implementation. The results are obtained without saving any exemplars.} 
\label{table:fsll}
\centering
\resizebox{0.82\textwidth}{!}{
\begin{tabular}{lccccccccc}
  \toprule
  \multirow[m]{2}{*}{\textbf{Method}} & \multicolumn{9}{c}{\textbf{sessions}} \\
  \cmidrule[0.05em]{2-10}
  & 1 & 2 & 3 & 4 & 5 & 6 & 7 & 8 & 9 \\
  \midrule
  FSLL~[3]\textsuperscript{$\ast$}        & 65.18 & 56.37 & 52.59 & 48.39 & 47.46 & 43.44 & 41.37 & 40.17 & 38.56\\
  FSLL~[3]                                & 64.10 & 55.85 & 51.71 & 48.59 & 45.34 & 43.25 & 41.52 & 39.81 & 38.16\\
  \bottomrule
\end{tabular}
}
\end{table*}

To verify the correctness of our implementation of FSLL [3], we compare the results of our implementation and those reported in~[3] in Table~\ref{table:fsll}. It can be seen that our implementation achieves similar and slightly higher results than those reported in the original paper [3]. Here, the experiments are conducted following the settings in [3] without saving any exemplars for new classes. 

\textbf{Norm of class prototype.} In our experiments, we observe that after training on base classes with balanced data, the norms of the class prototypes of base classes tend to be similar. However, after fine-tuning with very few data on unseen new classes, the norms of the new class prototypes are noticeably smaller than those of the base classes. In Table~\ref{table:norm}, we show the average norms of the prototypes of base classes and new classes after incremental few-shot learning on CIFAR-100, where we randomly select 60 classes as base classes and the remaining 40 classes as new classes.

\begin{table*}[h]
\renewcommand\arraystretch{1.2}
\caption{Classification accuracy for 5-way 5-shot incremental learning with the first class split on CIFAR-100.} 
\label{tab:case1_split}
\centering
\resizebox{0.94\textwidth}{!}{
\begin{tabular}{lcccccc}
  \toprule
  \multirow[m]{2}{*}{\textbf{Method}} & \multicolumn{5}{c}{\textbf{sessions}} & \multirow[m]{2}{*}{\shortstack{\textbf{The gap}\\ \textbf{with cRT}}}\\
  \cmidrule[0.05em]{2-6}
  & 1 (Animals) & 2 (Vehicles2) & 3 (Flowers) & 4 (Food Containers) & 5 (Household Furniture)\\
  \midrule
  Baseline      & 63.07 & 56.32 & 51.40 & 46.85 & 43.55 & -19.52\\
  ICaRL         & \textbf{63.30} & 55.10 & 49.12 & 44.46 & 40.95 & -22.35\\
  Rebalance     & 63.03 & 52.06 & 45.87 & 39.35 & 35.24 & -27.29\\
  FSLL          & 63.07 & 50.72 & 44.53 & 40.73 & 38.00 & -25.07\\
  \textbf{F2M}  & 62.53 & \textbf{56.63} & \textbf{51.87} & \textbf{47.54} & \textbf{44.10} & \textbf{-18.43} \\
  \bottomrule
\end{tabular}
}
\end{table*}
\begin{table*}[h]
\renewcommand\arraystretch{1.2}
\caption{Classification accuracy for 5-way 5-shot incremental learning with the second class split on CIFAR-100.} 
\label{tab:case2_split}
\centering
\resizebox{0.94\textwidth}{!}{
\begin{tabular}{lcccccc}
  \toprule
  \multirow[m]{2}{*}{\textbf{Method}} & \multicolumn{5}{c}{\textbf{sessions}} & \multirow[m]{2}{*}{\shortstack{\textbf{The gap}\\ \textbf{with cRT}}}\\
  \cmidrule[0.05em]{2-6}
  & 1 (Animals+Furniture) & 2 (People) & 3 (Vehicles2) & 4 (Flowers) & 5 (Food Containers) \\
  \midrule
  Baseline      & 63.07 & 54.30 & 50.16 & 46.19 & 43.16 & -19.91 \\
  ICaRL         & 62.57 & 51.67 & 47.51 & 42.98 & 39.63 & -22.94 \\
  Rebalance     & \textbf{63.50} & 49.62	& 44.67 & 39.68 & 35.64 & -27.86 \\
  FSLL          & 63.07 & 49.45 & 46.30 & 41.94 & 39.33 & -23.74 \\
  \textbf{F2M}  & 62.87 & \textbf{54.82} & \textbf{50.88} & \textbf{46.88} & \textbf{43.83} & \textbf{-19.04} \\
  \bottomrule
\end{tabular}
}
\end{table*}

\textbf{Results on CIFAR-100 with different class splits.}
To analyze how difference in patterns of the base and new classes influence our proposed method F2M, we split the classes according to the superclasses and provide results on two different class splits. All the 100 classes of CIFAR-100 are grouped into 20 superclasses, and each superclass contains 5 classes. For {the first class split}, the base classes consist of aquatic mammals, fish, insects, reptiles, small mammals, and large carnivores (30 classes in total). The few-shot novel classes consist of household furniture, vehicles2, flowers, and food containers (20 classes in total). For {the second class split}, the base classes consist of aquatic mammals, fish, insects, reptiles, household furniture, and small mammals (30 classes in total). The few-shot novel classes consist of people, vehicles2, flowers, and food containers (20 classes in total). The experimental results with the two different class splits are presented in Table~\ref{tab:case1_split} and Table~\ref{tab:case2_split} respectively. The results show that even with a large difference between the base classes and novel classes, our F2M still consistently outperforms other methods, indicating its robustness and effectiveness.

\textbf{Error bars of the main results.}
The experimental results reported in  Section~\ref{sec:exp} are the average of 10 runs. For each run, we randomly selected 5 samples for each class (for 5-shot tasks). Here, in Table~\ref{table:error_cifar}, Table~\ref{table:error_mini} and Table~\ref{table:error_cub}, we report the means and 95\% confidence intervals of our method F2M, the Baseline, and the methods that we re-implemented. The confidence intervals indicate that our method F2M achieves steady improvement over state-of-the-art methods.

\begin{table*}[ht]
\renewcommand\arraystretch{1.0}
\caption{Classification accuracy on CIFAR-100 for 5-way 5-shot incremental learning with 95\% confidence intervals. \textsuperscript{$\ast$} indicates our re-implementation.} 
\label{table:error_cifar}
\centering
\resizebox{0.88\textwidth}{!}{
\begin{tabular}{lccccccccc}
  \toprule
  \multirow[m]{2}{*}{\textbf{Method}} & \multicolumn{9}{c}{\textbf{sessions}} \\
  \cmidrule[0.05em]{2-10}
  & 1 & 2 & 3 & 4 & 5 & 6 & 7 & 8 & 9 \\
  \multirow{1}{*}[0.45em]{Baseline}                                       & \multirow{1}{*}[0.5em]{65.18} & \shortstack[r]{61.67 \\ $\pm$ 0.18} & \shortstack[r]{58.61 \\ $\pm$  0.25} & \shortstack[r]{55.11 \\ $\pm$  0.19} & \shortstack[r]{51.86 \\ $\pm$  0.22} & \shortstack[r]{49.43\\ $\pm$  0.28} & \shortstack[r]{47.60 \\ $\pm$  0.25} & \shortstack[r]{45.64 \\ $\pm$  0.29} & \shortstack[r]{43.83\\ $\pm$  0.22} \\
  \rule{0pt}{5ex}%
  \multirow{1}{*}[0.45em]{iCaRL~[4]\textsuperscript{$\ast$} }     & \multirow{1}{*}[0.5em]{66.52} & \shortstack[r]{57.26 \\ $\pm$  0.17} & \shortstack[r]{54.27 \\ $\pm$  0.25} & \shortstack[r]{50.62 \\ $\pm$  0.29} & \shortstack[r]{47.33 \\ $\pm$  0.27} & \shortstack[r]{44.99\\ $\pm$  0.26} & \shortstack[r]{43.14 \\ $\pm$  0.23} & \shortstack[r]{41.16 \\ $\pm$  0.30} & \shortstack[r]{39.49 \\ $\pm$  0.30} \\
  \rule{0pt}{5ex}%
  \multirow{1}{*}[0.45em]{Rebalance~[2]\textsuperscript{$\ast$} }   & \multirow{1}{*}[0.5em]{\textbf{66.66}} & \shortstack[r]{61.42 \\ $\pm$  0.25} & \shortstack[r]{57.29 \\ $\pm$  0.17} & \shortstack[r]{53.02 \\ $\pm$  0.20} & \shortstack[r]{48.85 \\ $\pm$  0.21} & \shortstack[r]{45.68\\ $\pm$  0.30} & \shortstack[r]{43.06 \\ $\pm$  0.27} & \shortstack[r]{40.56 \\ $\pm$  0.38} & \shortstack[r]{38.35 \\ $\pm$  0.48} \\
  \rule{0pt}{5ex}%
  \multirow{1}{*}[0.45em]{FSLL~[3]\textsuperscript{$\ast$} }       & \multirow{1}{*}[0.5em]{65.18} & \shortstack[r]{56.24 \\ $\pm$  0.35} & \shortstack[r]{54.55 \\ $\pm$  0.28} & \shortstack[r]{51.61 \\ $\pm$  0.36} & \shortstack[r]{49.11 \\ $\pm$  0.40} & \shortstack[r]{47.27\\ $\pm$  0.29} & \shortstack[r]{45.35 \\ $\pm$  0.32} & \shortstack[r]{43.95 \\ $\pm$  0.28} & \shortstack[r]{42.22 \\ $\pm$  0.49} \\
  \rule{0pt}{5ex}%
  \multirow{1}{*}[0.45em]{\textbf{F2M}}                                   & \multirow{1}{*}[0.5em]{64.71} & \shortstack[r]{\textbf{62.05} \\ $\bm\pm$ \textbf{0.19}} & \shortstack[r]{\textbf{59.01} \\ $\bm\pm$ \textbf{0.22}} & \shortstack[r]{\textbf{55.58} \\ $\bm\pm$ \textbf{0.21}} & \shortstack[r]{\textbf{52.55} \\ $\bm\pm$ \textbf{0.25}} & \shortstack[r]{\textbf{49.96} \\ $\bm\pm $ \textbf{0.21}} & \shortstack[r]{\textbf{48.08} \\ $\bm\pm$ \textbf{0.24}} & \shortstack[r]{\textbf{46.28} \\ $\bm\pm$ \textbf{0.24}} & \shortstack[r]{\textbf{44.67} \\ $\bm\pm$ \textbf{0.19}} \\
  \bottomrule
\end{tabular}
}
\end{table*}
\begin{table*}[ht]
\renewcommand\arraystretch{1.0}
\caption{Classification accuracy on miniImageNet for 5-way 5-shot incremental learning with 95\% confidence intervals. \textsuperscript{$\ast$} indicates our re-implementation.} 
\label{table:error_mini}
\centering
\resizebox{0.88\textwidth}{!}{
\begin{tabular}{lccccccccc}
  \toprule
  \multirow[m]{2}{*}{\textbf{Method}} & \multicolumn{9}{c}{\textbf{sessions}} \\
  \cmidrule[0.05em]{2-10}
  & 1 & 2 & 3 & 4 & 5 & 6 & 7 & 8 & 9 \\
  \multirow{1}{*}[0.45em]{Baseline}                                       & \multirow{1}{*}[0.5em]{67.30} & \shortstack[r]{63.18 \\ $\pm$ 0.00} & \shortstack[r]{59.62 \\ $\pm$  0.12} & \shortstack[r]{56.33 \\ $\pm$  0.18} & \shortstack[r]{53.28 \\ $\pm$  0.27} & \shortstack[r]{50.50\\ $\pm$  0.28} & \shortstack[r]{47.96 \\ $\pm$  0.30} & \shortstack[r]{45.85 \\ $\pm$  0.32} & \shortstack[r]{43.88\\ $\pm$  0.27} \\
  \rule{0pt}{5ex}%
  \multirow{1}{*}[0.45em]{iCaRL~[4]\textsuperscript{$\ast$} }     & \multirow{1}{*}[0.5em]{67.35} & \shortstack[r]{59.91 \\ $\pm$  0.15} & \shortstack[r]{55.64 \\ $\pm$  0.20} & \shortstack[r]{52.60 \\ $\pm$  0.30} & \shortstack[r]{49.43 \\ $\pm$  0.32} & \shortstack[r]{46.73\\ $\pm$  0.28} & \shortstack[r]{44.13 \\ $\pm$  0.33} & \shortstack[r]{42.17 \\ $\pm$  0.33} & \shortstack[r]{40.29 \\ $\pm$  0.31} \\
  \rule{0pt}{5ex}%
  \multirow{1}{*}[0.45em]{Rebalance~[2]\textsuperscript{$\ast$} }   & \multirow{1}{*}[0.5em]{\textbf{67.91}} & \shortstack[r]{63.11 \\ $\pm$  0.19} & \shortstack[r]{58.75 \\ $\pm$  0.29} & \shortstack[r]{54.83 \\ $\pm$  0.37} & \shortstack[r]{50.68 \\ $\pm$  0.38} & \shortstack[r]{47.11\\ $\pm$  0.36} & \shortstack[r]{43.88 \\ $\pm$  0.33} & \shortstack[r]{41.19 \\ $\pm$  0.38} & \shortstack[r]{38.72 \\ $\pm$  0.39} \\
  \rule{0pt}{5ex}%
  \multirow{1}{*}[0.45em]{FSLL~[3]\textsuperscript{$\ast$} }       & \multirow{1}{*}[0.5em]{67.30} & \shortstack[r]{59.81 \\ $\pm$  0.42} & \shortstack[r]{57.26 \\ $\pm$  0.55} & \shortstack[r]{54.57 \\ $\pm$  0.58} & \shortstack[r]{52.05 \\ $\pm$  0.49} & \shortstack[r]{49.42\\ $\pm$  0.37} & \shortstack[r]{46.95 \\ $\pm$  0.36} & \shortstack[r]{44.94 \\ $\pm$  0.20} & \shortstack[r]{42.87 \\ $\pm$  0.25} \\
  \rule{0pt}{5ex}%
  \multirow{1}{*}[0.45em]{\textbf{F2M}}                                   & \multirow{1}{*}[0.5em]{67.28} & \shortstack[r]{\textbf{63.80} \\ $\bm\pm$ \textbf{0.10}} & \shortstack[r]{\textbf{60.38} \\ $\bm\pm$ \textbf{0.19}} & \shortstack[r]{\textbf{57.06} \\ $\bm\pm$ \textbf{0.29}} & \shortstack[r]{\textbf{54.08} \\ $\bm\pm$ \textbf{0.28}} & \shortstack[r]{\textbf{51.39} \\ $\bm\pm$ \textbf{0.32}} & \shortstack[r]{\textbf{48.82} \\ $\bm\pm $ \textbf{0.32}} & \shortstack[r]{\textbf{46.58} \\ $\bm\pm$ \textbf{0.33}} & \shortstack[r]{\textbf{44.65} \\ $\bm\pm$ \textbf{0.29}} \\
  \bottomrule
\end{tabular}
}
\end{table*}
\begin{table*}[ht]
\renewcommand\arraystretch{1.0}
\caption{Classification accuracy on CUB-200-2011 for 10-way 5-shot incremental learning with 95\% confidence intervals. \textsuperscript{$\ast$} indicates our re-implementation.} 
\label{table:error_cub}
\centering
\resizebox{\textwidth}{!}{
\begin{tabular}{lccccccccccc}
  \toprule
  \multirow[m]{2}{*}{\textbf{Method}} & \multicolumn{9}{c}{\textbf{sessions}} \\
  \cmidrule[0.05em]{2-12}
  & 1 & 2 & 3 & 4 & 5 & 6 & 7 & 8 & 9 & 10 & 11\\
  \multirow{1}{*}[0.45em]{Baseline}                                       & \multirow{1}{*}[0.5em]{80.87} & \shortstack[r]{77.15 \\ $\pm$ 0.18} & \shortstack[r]{74.46 \\ $\pm$  0.22} & \shortstack[r]{72.26 \\ $\pm$  0.26} & \shortstack[r]{69.47 \\ $\pm$  0.35} & \shortstack[r]{67.18\\ $\pm$  0.27} & \shortstack[r]{65.62 \\ $\pm$  0.38} & \shortstack[r]{63.68 \\ $\pm$  0.25} & \shortstack[r]{61.30\\ $\pm$  0.22} & \shortstack[r]{59.72\\ $\pm$  0.27} & \shortstack[r]{58.12\\ $\pm$  0.27}\\
  \rule{0pt}{5ex}%
  \multirow{1}{*}[0.45em]{iCaRL~[4]\textsuperscript{$\ast$} }     & \multirow{1}{*}[0.5em]{79.58} & \shortstack[r]{67.63 \\ $\pm$  0.25} & \shortstack[r]{64.17\\ $\pm$  0.30} & \shortstack[r]{61.80 \\ $\pm$  0.35} & \shortstack[r]{58.10 \\ $\pm$  0.33} & \shortstack[r]{55.51\\ $\pm$  0.38} & \shortstack[r]{53.34 \\ $\pm$  0.32} & \shortstack[r]{50.89 \\ $\pm$  0.25} & \shortstack[r]{48.62\\ $\pm$  0.29} & \shortstack[r]{47.34\\ $\pm$  0.33} & \shortstack[r]{45.60\\ $\pm$  0.31}\\
  \rule{0pt}{5ex}%
  \multirow{1}{*}[0.45em]{Rebalance~[2]\textsuperscript{$\ast$} }   & \multirow{1}{*}[0.5em]{80.94} & \shortstack[r]{70.32 \\ $\pm$  0.28} & \shortstack[r]{62.96 \\ $\pm$  0.31} & \shortstack[r]{57.19 \\ $\pm$  0.30} & \shortstack[r]{51.06 \\ $\pm$  0.37} & \shortstack[r]{46.70 \\ $\pm$  0.29} & \shortstack[r]{44.03 \\ $\pm$  0.40} & \shortstack[r]{40.15 \\ $\pm$  0.27} & \shortstack[r]{36.75 \\ $\pm$  0.32} & \shortstack[r]{34.88 \\ $\pm$  0.35} & \shortstack[r]{32.09 \\ $\pm$  0.39}\\
  \rule{0pt}{5ex}%
  \multirow{1}{*}[0.45em]{FSLL~[3]\textsuperscript{$\ast$} }       & \multirow{1}{*}[0.5em]{80.83} & \shortstack[r]{77.38 \\ $\pm$  0.30} & \shortstack[r]{72.37 \\ $\pm$  0.25} & \shortstack[r]{71.84 \\ $\pm$  0.45} & \shortstack[r]{67.51 \\ $\pm$  0.42} & \shortstack[r]{65.30 \\ $\pm$  0.50} & \shortstack[r]{63.75 \\ $\pm$  0.39} & \shortstack[r]{61.16 \\ $\pm$  0.28} & \shortstack[r]{59.05 \\ $\pm$  0.37} & \shortstack[r]{58.03 \\ $\pm$  0.35} & \shortstack[r]{55.82 \\ $\pm$  0.33}\\
  \rule{0pt}{5ex}%
  \multirow{1}{*}[0.45em]{\textbf{F2M}}                                   & \multirow{1}{*}[0.5em]{\textbf{81.07}} & \shortstack[r]{\textbf{78.16} \\ $\bm\pm$ \textbf{0.14}} & \shortstack[r]{\textbf{75.57} \\ $\bm\pm$ \textbf{0.24}} & \shortstack[r]{\textbf{72.89} \\ $\bm\pm$ \textbf{0.32}} & \shortstack[r]{\textbf{70.86} \\ $\bm\pm$ \textbf{0.25}} & \shortstack[r]{\textbf{68.17} \\ $\bm\pm $ \textbf{0.39}} & \shortstack[r]{\textbf{67.01} \\ $\bm\pm$ \textbf{0.32}} & \shortstack[r]{\textbf{65.26} \\ $\bm\pm$ \textbf{0.26}} & \shortstack[r]{\textbf{63.36} \\ $\bm\pm$ \textbf{0.24}} & \shortstack[r]{\textbf{61.76} \\ $\bm\pm$ \textbf{0.27}}   & \shortstack[r]{\textbf{60.26} \\ $\bm\pm$ \textbf{0.28}} \\
  \bottomrule
\end{tabular}
}
\end{table*}

\textbf{Results with the same class splits as in TOPIC~[6].}
The experimental results of our F2M and some other methods (our re-implementations) presented in Table~\ref{tab:cifar}, Table~\ref{tab_miniImageNet}, and Table~\ref{tab_cub} are on random class splits with random seed 1997. Here, we conduct experiments using the same class split as in TOPIC~[6]. The experimental results on CIFAR-100, \emph{mini}ImageNet, and CUB-200-2011 are presented in Table~\ref{tab:topic_cifar}, Table~\ref{tab:topic_mini}, and Table~\ref{table:topic_cub} respectively. The results show that 
the Baseline and our F2M still consistently outperform other methods. 
Note that on CUB-200-2011, joint-training outperforms the Baseline and our F2M. The reasons may include: 1) The data imbalance issue is not very significant since the average number of images per class of this dataset is relatively small (about 30); and 2) During the base training stage, we use a smaller learning rate (e.g., 0.001) for the embedding network (pretrained on ImageNet) and a higher learning rate (e.g., 0.01) for the classifier.

\begin{table*}[ht]
\renewcommand\arraystretch{1.0}
\caption{Classification accuracy on CIFAR-100 for 5-way 5-shot incremental learning with the same class split as in TOPIC~[6]. \textsuperscript{$\ast$} indicates our re-implementation.} 
\label{tab:topic_cifar}
\centering
\resizebox{0.9\textwidth}{!}{
\begin{tabular}{lcccccccccc}
  \toprule
  \multirow[m]{2}{*}{\textbf{Method}} & \multicolumn{9}{c}{\textbf{sessions}} & \multirow[m]{2}{*}{\shortstack{\textbf{The gap}\\ \textbf{with cRT}}}\\
  \cmidrule[0.05em]{2-10}
  & 1 & 2 & 3 & 4 & 5 & 6 & 7 & 8 & 9 &\\
  \midrule
  cRT~[8]\textsuperscript{$\ast$}            & 72.28	& 69.58 & 65.16 & 61.41 & 58.83 & 55.87 & 53.28 & 51.38 & 49.51
 & -\\
  Joint-training\textsuperscript{$\ast$}            & 72.28 & 68.40 & 63.31 & 59.16 & 55.73 & 52.81 & 49.01 & 46.74 & 44.34 & -5.17\\
  Baseline                                          & 72.28 & 68.01 & 64.18 & 60.56 & 57.44 & 54.69 & 52.98 & 50.80 & 48.70 &	-0.81
 \\
  \midrule
  iCaRL~[4]\textsuperscript{$\ast$}        & 72.05 & 65.35 & 61.55 & 57.83 & 54.61 & 51.74 & 49.71 & 47.49 & 45.03 &	-4.48 \\
  Rebalance~[2]\textsuperscript{$\ast$}      & \textbf{74.45}	& 67.74	& 62.72	& 57.14	& 52.78	& 48.62	& 45.56	& 42.43	& 39.22 &	-10.29 \\
  FSLL~[3]\textsuperscript{$\ast$}          & 72.28 & 63.84 & 59.64 & 55.49 & 53.21 & 51.77 & 50.93 & 48.94 & 46.96 &	-2.55 \\
  iCaRL~[4]     & 64.10 & 53.28 & 41.69 & 34.13 & 27.93 & 25.06 & 20.41 & 15.48 & 13.73 & -35.78\\
  Rebalance~[2]  & 64.10 & 53.05 & 43.96 & 36.97 & 31.61 & 26.73 & 21.23 & 16.78 & 13.54 & -35.97\\
  TOPIC~[6]     & 64.10 & 55.88 & 47.07 & 45.16 & 40.11 & 36.38 & 33.96 & 31.55 & 29.37 & -20.14\\
  FSLL~[3]      & 64.10 & 55.85 & 51.71 & 48.59 & 45.34 & 43.25 & 41.52 & 39.81 & 38.16 & -11.35\\
  FSLL+SS~[3]    & 66.76 & 55.52 & 52.20 & 49.17 & 46.23 & 44.64 & 43.07 & 41.20 & 39.57 & -9.94\\
  \midrule
  \textbf{F2M}          & 71.45 & \textbf{68.10} & \textbf{64.43} & \textbf{60.80} & \textbf{57.76} & \textbf{55.26} & \textbf{53.53} & \textbf{51.57} & \textbf{49.35} & \textbf{-0.16} \\
  \bottomrule
\end{tabular}
}
\end{table*}

\begin{table*}[ht]
\renewcommand\arraystretch{1.0}
\caption{Classification accuracy on \emph{mini}ImageNet for 5-way 5-shot incremental learning with the same class split as in TOPIC~[6]. \textsuperscript{$\ast$} indicates our re-implementation.} 
\label{tab:topic_mini}
\centering
\resizebox{0.9\textwidth}{!}{
\begin{tabular}{lcccccccccc}
  \toprule
  \multirow[m]{2}{*}{\textbf{Method}} & \multicolumn{9}{c}{\textbf{sessions}} & \multirow[m]{2}{*}{\shortstack{\textbf{The gap}\\ \textbf{with cRT}}}\\
  \cmidrule[0.05em]{2-10}
  & 1 & 2 & 3 & 4 & 5 & 6 & 7 & 8 & 9 &\\
  \midrule
  cRT~[8]\textsuperscript{$\ast$}            & 72.08 & 68.15 & 63.06 & 61.12 & 56.57 & 54.47 & 51.81 & 49.86 & 48.31
 & -\\
  Joint-training\textsuperscript{$\ast$}            & 72.08	& 67.31 & 62.04 & 58.51 & 54.41 & 51.53 & 48.70 & 45.49 & 43.88
 & -4.43\\
  Baseline                                          & 72.08	& 66.29	& 61.99 & 58.71 & 55.73 & 53.04 & 50.40 & 48.59 & 47.31 & -1.0 \\
  \midrule
  iCaRL~[4]\textsuperscript{$\ast$}        & 71.77 & 61.85 & 58.12 & 54.60 & 51.49 & 48.47 & 45.90 & 44.19 & 42.71 & -5.6 \\
  Rebalance~[2]\textsuperscript{$\ast$}      & \textbf{72.30} & 66.37 & 61.00 & 56.93 & 53.31 & 49.93 & 46.47 & 44.13 & 42.19 & -6.12 \\
  FSLL~[3]\textsuperscript{$\ast$}          & 72.08 & 59.04 & 53.75 & 51.17 & 49.11 & 47.21 & 45.35 & 44.06 & 43.65 & -4.66 \\
  iCaRL~[4]      & 61.31 & 46.32 & 42.94 & 37.63 & 30.49 & 24.00 & 20.89 & 18.80 & 17.21 & -31.10\\
  Rebalance~[2]  & 61.31 & 47.80 & 39.31 & 31.91 & 25.68 & 21.35 & 18.67 & 17.24 & 14.17 & -34.14\\
  TOPIC~[6]      & 61.31 & 50.09 & 45.17 & 41.16 & 37.48 & 35.52 & 32.19 & 29.46 & 24.42 & -23.89\\
  FSLL~[3]       & 66.48 & 61.75 & 58.16 & 54.16 & 51.10 & 48.53 & 46.54 & 44.20 & 42.28 & -6.03\\
  FSLL+SS~[3]    & 68.85 & 63.14 & 59.24 & 55.23 & 52.24 & 49.65 & 47.74 & 45.23 & 43.92 & -4.39\\
  \midrule
  \textbf{F2M}   & 72.05	& \textbf{67.47} & \textbf{63.16} & \textbf{59.70} & \textbf{56.71} & \textbf{53.77} & \textbf{51.11} & \textbf{49.21} & \textbf{47.84} & \textbf{-0.43} \\
  \bottomrule
\end{tabular}
}
\end{table*}
\begin{table*}[ht]
\renewcommand\arraystretch{1.0}
\caption{Classification accuracy on CUB-200-2011 for 10-way 5-shot incremental learning with the same class split as in TOPIC~[6]. \textsuperscript{$\ast$} indicates our re-implementation.} 
\label{table:topic_cub}
\centering
\resizebox{\textwidth}{!}{
\begin{tabular}{lcccccccccccc}
  \toprule
  \multirow[m]{2}{*}{\textbf{Method}} & \multicolumn{11}{c}{\textbf{sessions}} & \multirow[m]{2}{*}{\shortstack{\textbf{The gap}\\ \textbf{with cRT}}}\\
  \cmidrule[0.05em]{2-12}
  & 1 & 2 & 3 & 4 & 5 & 6 & 7 & 8 & 9 & 10 & 11\\
  \midrule
  cRT~[8]\textsuperscript{$\ast$}      & 77.16 & 74.41 & 71.31 & 68.08 & 65.57 & 63.08 & 62.44 & 61.29 & 60.12 & 59.85 & 59.30 & -\\
  Joint-training\textsuperscript{$\ast$}      & 77.16 & 74.39 & 69.83 & 67.17 & 64.72 & 62.25 & 59.77 & 59.05 & 57.99 & 57.81 & 56.82 & -2.48 \\
  Baseline                                    & 77.16 & \textbf{74.00} & 70.21 & 66.07 & 63.90 & 61.35 & 60.01 & 58.66 & 56.33 & 56.12	& 55.07	& -4.23 \\
  \midrule
  iCaRL~[4]\textsuperscript{$\ast$}     & 75.95	& 60.90	& 57.65	& 54.51	& 50.83 & 48.21 & 46.95 & 45.74 & 43.21 
  & 43.01 & 41.27 & -18.03\\
  Rebalance~[2]\textsuperscript{$\ast$} & \textbf{77.44} & 58.10 & 50.15 & 44.80 & 39.12 & 34.44 & 31.73 & 29.75 & 27.56 & 26.93 & 25.30 & -34.00\\
  FSLL~[3]\textsuperscript{$\ast$}      & 77.16 & 71.85 & 66.53 & 59.95 & 58.01 & 57.00 & 56.06 & 54.78 & 52.24 & 52.01 & 51.47 & -7.83\\
  iCaRL~[4]     & 68.68 & 52.65 & 48.61 & 44.16 & 36.62 & 29.52 & 27.83 & 26.26 & 24.01 & 23.89 & 21.16 & -39.92\\
  Rebalance~[2]   & 68.68 & 57.12 & 44.21 & 28.78 & 26.71 & 25.66 & 24.62 & 21.52 & 20.12 & 20.06 & 19.87 & -41.21\\
  TOPIC~[6]     & 68.68 & 62.49 & 54.81 & 49.99 & 45.25 & 41.40 & 38.35 & 35.36 & 32.22 & 28.31 & 26.28 & -34.80\\
  FSLL~[3]       & 72.77 & 69.33 & 65.51 & 62.66 & 61.10 & 58.65 & 57.78 & 57.26 & 55.59 & 55.39 & 54.21 & -6.87\\
  FSLL+SS~[3]    & 75.63 & 71.81 & 68.16 & 64.32 & 62.61 & 60.10 & 58.82 & 58.70 & 56.45 & 56.41 & 55.82 & -5.26\\
  \midrule
  \textbf{F2M}           & 77.13 & 73.92 & \textbf{70.27} & \textbf{66.37} & \textbf{64.34} & \textbf{61.69} & \textbf{60.52} & \textbf{59.38} & \textbf{57.15} & \textbf{56.94} & \textbf{55.89} & \textbf{-3.41}\\
  \bottomrule
\end{tabular}
}
\end{table*}

\section*{References}
[1] Prentice-Hall, Englewood Cliffs. Mathematical Statistics. NJ, 1962.

[2] Saihui Hou, Xinyu Pan, Chen Change Loy, Zilei Wang, and Dahua Lin. Learning a unified classifier incrementally via rebalancing. In \emph{Proceedings of the IEEE/CVF Conference on Computer Vision and Pattern Recognition}, pages 831–839, 2019.

[3] Pratik Mazumder, Pravendra Singh, and Piyush Rai. Few-shot lifelong learning. In \emph{Proceedingsof the AAAI Conference on Artificial Intelligence}, volume 35, pages 2337–2345, 2021.

[4] Sylvestre-Alvise Rebuffi, Alexander Kolesnikov, Georg Sperl, and Christoph H Lampert. icarl: Incremental classifier and representation learning. \emph{In Proceedings of the IEEE conference on Computer Vision and Pattern Recognition}, pages 2001–2010, 2017.

[5] Jia Deng, Wei Dong, Richard Socher, Li-Jia Li, Kai Li, and Li Fei-Fei. Imagenet:  A large-scale hierarchical image database. In \emph{2009 IEEE conference on computer vision and pattern recognition}, pages 248–255. Ieee, 2009.

[6] Xiaoyu Tao, Xiaopeng Hong, Xinyuan Chang, Songlin Dong, Xing Wei, and Yihong Gong. Few-shot class-incremental learning. In \emph{Proceedings of the IEEE/CVF Conference on Computer Vision and Pattern Recognition}, pages 12183–12192, 2020.

[7] Saihui Hou, Xinyu Pan, Chen Change Loy, Zilei Wang, and Dahua Lin.  Learning a unified classifier incrementally via rebalancing. In \emph{Proceedings of the IEEE/CVF Conference on Computer Vision and Pattern Recognition}, pages 831–839, 2019.

[8] Bingyi Kang, Saining Xie, Marcus Rohrbach, Zhicheng Yan, Albert Gordo, Jiashi Feng, and Yannis Kalantidis. Decoupling representation and classifier for long-tailed recognition.  In \emph{International Conference on Learning Representations}, 2019.

[9] Kuilin Chen and Chi-Guhn Lee. Incremental few-shot learning via vector quantization in deep embedded space. In \emph{International Conference on Learning Representations}, 2021.

\end{document}